\theoremstyle{definition}
\newtheorem{theorem}{Theorem}[section]
\newtheorem{lemma}[theorem]{Lemma}
\newtheorem{corollary}[theorem]{Corollary}
\newtheorem{proposition}[theorem]{Proposition}
\newtheorem{definition}[theorem]{Definition}
\newcommand{\T}{\mathsf{T}} 
\newcommand{\N}{\mathbb{N}}
\newcommand{\F}{\mathcal{F}}
\newcommand{\R}{\mathbb{R}}
\newcommand{\Q}{\mathbb{Q}}
\newcommand{\Z}{\mathbb{Z}}
\newcommand{\x}{\mathbf{x}}
\newcommand{\y}{\mathbf{y}}
\newcommand{\w}{\mathbf{w}}
\newcommand{\z}{\mathbf{z}}
\newcommand{\p}{\mathbf{p}}
\renewcommand{\u}{\mathbf{u}}
\renewcommand{\a}{\mathbf{a}}
\renewcommand{\b}{\mathbf{b}}
\renewcommand{\c}{\mathbf{c}}
\newcommand{\X}{\mathcal{X}}
\newcommand{\W}{\mathcal{W}}
\newcommand{\D}{\mathcal{D}}
\renewcommand{\H}{\mathcal{H}}
\renewcommand{\O}{\mathcal{O}}
\newcommand{\I}{\mathcal{I}}
\renewcommand{\P}{\mathcal{P}}
\newcommand{\0}{\mathbf{0}}
\DeclareMathOperator{\sgn}{sgn} 
\DeclareMathOperator{\ReLU}{ReLU}
\DeclareMathOperator{\argmax}{arg\,max} 
\DeclareMathOperator{\Pdim}{Pdim} 
\DeclareMathOperator{\CReLU}{CReLU} 
\DeclareMathOperator{\Sigmoid}{Sigmoid} 
\DeclareMathOperator{\Enc}{Enc} 
\DeclareMathOperator{\CG}{CG} 
\DeclareMathOperator{\ROW}{ROW} 
\DeclareMathOperator{\Score}{Score} 
\title{Sample Complexity of Algorithm Selection Using Neural Networks and Its Applications to Branch-and-Cut}
\author[1]{\textbf{Hongyu Cheng}}
\author[1]{\textbf{Sammy Khalife}}
\author[1]{\textbf{Barbara Fiedorowicz}}
\author[1]{\textbf{Amitabh Basu}}
\affil[1]{Department of Applied Mathematics and Statistics, Johns Hopkins University 

\texttt{\{hongyucheng, khalife.sammy, bfiedor1, basu.amitabh\}@jhu.edu}}
\begin{document}

\maketitle

\begin{abstract}
     Data-driven algorithm design is a paradigm that uses statistical and machine learning techniques to select from a class of algorithms for a computational problem an algorithm that has the best expected performance with respect to some (unknown) distribution on the instances of the problem. We build upon recent work in this line of research by considering the setup where, instead of selecting a single algorithm that has the best performance, we allow the possibility of selecting an algorithm based on the instance to be solved, using neural networks. In particular, given a representative sample of instances, we learn a neural network that maps an instance of the problem to the most appropriate algorithm {\em for that instance}. We formalize this idea and derive rigorous sample complexity bounds for this learning problem, in the spirit of recent work in data-driven algorithm design. We then apply this approach to the problem of making good decisions in the branch-and-cut framework for mixed-integer optimization (e.g., which cut to add?). In other words, the neural network will take as input a mixed-integer optimization instance and output a decision that will result in a small branch-and-cut tree for that instance. Our computational results provide evidence that our particular way of using neural networks for cut selection can make a significant impact in reducing branch-and-cut tree sizes, compared to previous data-driven approaches.
\end{abstract}

\section{Background and motivation}\label{sec:intro}

Often there are several competing algorithms for a computational problem and no single algorithm dominates all the others. The choice of an algorithm in such cases is often dictated by the ``typical" instance one expects to see, which may differ from one application context to another. Data-driven algorithm design has emerged in recent years as a way to formalize this question of algorithm selection and draws upon statistical and machine learning techniques; see~\cite{balcan2020data} for a survey and references therein. More formally, suppose one has a class $\I$ of instances of some computational problem with some unknown distribution, and class of algorithms that can solve this problem parameterized by some ``tunable parameters". Suppose that for each setting of the parameters, the corresponding algorithm can be evaluated by a score function that tells us how well the algorithm does on different instances (e.g., running time, memory usage etc.). We wish to find the set of parameters that minimizes (or maximizes, depending on the nature of the score) the expected score on the instances (with respect to the unknown distribution), after getting access to an i.i.d. sample of instances from the distribution. For example, $\I$ could be a family of mixed-integer optimization problems and the class of algorithms are branch-and-cut methods with their different possible parameter settings, and the score function could be the size of the branch-and-cut tree. 

In~\cite{balcan2021much}, the authors prove a central result on the sample complexity of this problem: if for any fixed instance in $\I$, the score function has a piecewise structure as a function of the parameters, where the number of pieces is upper bounded by a constant $R$ independent of the instance, the pieces come from a family of functions of ``low complexity", and the regions partitioning the parameter space into pieces have ``low complexity" in terms of their shape, then the sample complexity is a precise function of $R$ and these two complexity numbers (formalized by the notion of pseudo-dimension -- see below). See Theorem 3.3 in~\cite{balcan2021much} for details. The authors then proceed to apply this key result to a diverse variety of algorithmic problems ranging from computational biology, computational economics to integer optimization.

Our work in this paper is motivated by the following consideration. In many of these applications, one would like to select not a {\em single} algorithm, i.e., a single setting of the parameters, but would like to decide on the parameters after receiving a new instance of the problem. In other words, we would like to learn not the best parameter, but the {\em best mapping from instances to parameters.} We consider this version of the problem, where the mapping is taken from a class of neural network functions. At a high level, switching the problem from finding the ``best'' parameter to finding the ``best'' mapping in a parameterized family of mappings gives the same problem: we now have a new set of parameters -- the ones parameterizing the mappings (neural networks) -- and we have to select the best parameter. And indeed, the result from~\cite{balcan2021much} can be applied to this new setting, {\em as long as one can prove good bounds on pseudo-dimension in the space of these \underline{new} parameters}. The key point of our result is two fold:
\begin{enumerate}
    \item Even if original space of parameters for the algorithms is amenable to the analysis of pseudo-dimension as done in~\cite{balcan2021much}, it is not clear that this immediately translates to a similar analysis in the parameter space of the neural networks, because the ``low complexity piecewise structure" from the original parameter space may not be preserved. 
    \item We suspect that our proof technique results in tighter sample complexity bounds, compared to what one would obtain if one could do the analysis from~\cite{balcan2021much} in the parameter space of the neural network directly. However, we do not have quantitative evidence of this yet because their analysis seems difficult to carry out directly in the neural parameter space.
\end{enumerate}

One of the foundational works in the field of {\em selecting algorithms based on specific instances} is \cite{rice1976algorithm}, and recently, \cite{gupta2016pac,balcan2021generalization} have explored the sample complexity of learning mappings from instances to algorithms for particular problems. Our approach is also related to recent work on \emph{algorithm design with predictions}; see \cite{mitzenmacher2022algorithms} for a short introduction and the references therein for more thorough surveys and recent work. However, our emphasis and the nature of our results are quite different from the focus of previous research. We establish sample complexity bounds for a general learning framework that employs neural networks to map instances to algorithms, which is suitable for handling some highly complex score functions, such as the size of the branch-and-cut tree. The introduction of neural networks in our approach bring more intricate technical challenges than traditional settings like linear predictors or regression trees \cite{gupta2016pac}. On the application side, as long as we know some information about the algorithms (see \cref{thm:main_theorem_1,thm:main_theorem_2}), our results can be directly applied. This is demonstrated in this paper through applications  to various {\em cutting plane selection} problems in branch-and-cut in Propositions~\ref{thm:learnability-of-1-chavatal-cut},~\ref{thm:learnability-of-k-chavatal-cuts},~\ref{prop:learnability-of-finite-cuts} and~\ref{prop:learnability-of-cut-selection-policy}. Several references to fundamental applied work in data-driven algorithm selection can be found in the~\cite{balcan2020data,gupta2016pac,balcan2021much,balcan2021generalization}.

\subsection{Applications in branch-and-cut methods for mixed-integer linear optimization}
Mixed-integer linear optimization is a powerful tool that is used in a diverse number of application domains. Branch-and-cut is the solution methodology of choice for all state-of-the-art solvers for mixed-integer optimization that is founded upon a well-developed theory of convexification and systematic enumeration~\cite{sch,nemhauser1988integer,conforti2014integer}. However, even after decades of theoretical and computational advances, several key aspects of branch-and-cut are not well understood. During the execution of the branch-and-cut algorithm on an instance, the algorithm has to repeatedly make certain decisions such as which node of the search tree to process next, whether one should branch or add cutting planes, which cutting planes to add, or which branching strategy to use. The choices that give a small tree size for a particular instance may not be good choices for a different instance and result in a much larger tree. Thus, adapting these choices to the particular instance can be beneficial for overall efficiency. Of course, the strategies already in place in the best implementations of branch-and-cut have to adapt to the instances. For example, certain cutting plane choices may not be possible for certain instances. But even beyond that, there are certain heuristics in place that adapt these choices to the instance. These heuristics have been arrived at by decades of computational experience from practitioners. The goal in recent years is to provide a \emph{data driven} approach to making these choices. In a recent series of papers ~\cite{balcan2021much,balcan2021sample,balcan2021improved,balcan2022structural,balcan2018learning}, the authors apply the general sample complexity result from~\cite{balcan2021much} in the specific context of branch-and-cut methods for mixed-integer linear optimization to obtain several remarkable and first-of-their-kind results. We summarize here those results that are most relevant to the cut selection problem since this is the focus of our work.

\begin{enumerate}
\item In~\cite{balcan2021sample}, the authors consider the problem of selecting the best Chv\'atal-Gomory (CG) cutting plane (or collection of cutting planes) to be added at the root node of the branch-and-cut tree. Thus, the ``tunable parameters" are the possible multipliers to be used to generate the CG cut at the root node. The score function is the size of the resulting branch-and-cut tree. The authors observe that for a fixed instance of a mixed-integer linear optimization problem, there are only finitely many CG cuts possible and the number can be bounded explicitly in terms of entries of the linear constraints of the problem. Via a sophisticated piece of analysis, this gives the required piecewise structure on the space of multipliers to apply the general result explained above. This gives concrete sample complexity bounds for choosing the multipliers with the best expected performance. See Theorem 3.3, 3.5 and 3.6 in~\cite{balcan2021sample} for details. Note that this result is about selecting a \emph{single} set of multipliers/cutting planes that has the best expected performance across all instances. This contrasts with selecting a good strategy to select multipliers \emph{depending on the instance}, that has good expected performance (see point 2. below). 
\item The authors in~\cite{balcan2021sample} also consider the problem of learning a good strategy that selects multipliers based on the instance. In particular, they consider various auxiliary score functions used in integer optimization practice, that map a pair of instance $I$ and a cutting plane $c$ for $I$ that measures how well $c$ will perform for processing $I$. The strategy will be a linear function of these auxiliary scores, i.e.,  a weighted average of these auxiliary scores, and the learning problem becomes the problem of finding the best linear coefficients for the different auxiliary scores. So now these linear coefficients become the ``tunable parameters" for the general result from~\cite{balcan2021much}. It is not hard to find the piecewise structure in the space of these new parameters, given the analysis in the space of CG cut multipliers from point 1. above. This then gives concrete sample complexity bounds for learning the best set of weights for these auxiliary score functions for cut selection. See Theorem 4.1 and Corollary 4.2 in~\cite{balcan2021sample} for details.

\item In all the previous results discussed above, the cutting planes considered were CG cuts, or it was essentially assumed that there are only a finite number of cutting planes available at any stage of the branch-and-cut algorithm. In the most recent paper~\cite{balcan2022structural}, the authors consider general cutting plane paradigms, and also consider the possibility of allowing more general strategies to select cutting planes beyond using weighted combinations of auxiliary score functions. They uncover the subtlety that {\em allowing general mappings from instances to cutting planes can lead to infinite sample complexity and learning such mappings could be impossible, if the class of mappings is allowed to be too rich}. See Theorem 5.1 in~\cite{balcan2022structural}. This point will be important when we discuss our approach below. 

On the positive side, they show that the well-known class of Gomory-Mixed-Integer (GMI) cuts has a similar structure to CG cuts, and therefore, using similar techniques as discussed above, they derive sample complexity bounds for selecting GMI cuts at the root node. See Theorem 5.5 in ~\cite{balcan2022structural}. As far as we understand, the analysis should extend to the problem of learning weighted combinations of auxiliary score functions to select the GMI cuts as well using the same techniques as~\cite{balcan2021sample}, although the authors do not explicitly do this in~\cite{balcan2022structural}. 

\end{enumerate}

\paragraph{Our approach and results.}
Our point of departure from the line of work discussed above is that instead of using weighted combinations of auxiliary scores to select cutting planes, we wish to select these cutting planes using neural networks that map instances to cutting planes. In other words, in the general framework described above, the ``tunable parameters" are the weights of the neural network. The overall score function is the size of the branch-and-cut tree after cuts are added at the root. We highlight the two main differences caused by this change in perspective.

\begin{enumerate}
    \item In the approach where weighted combinations of auxiliary score functions are used, after the weights are learnt from the sample instances, for every new/unseen instance one has to compute the cut that maximizes the weighted score. This could be an expensive optimization problem in its own right. In contrast, with our neural approach, after training the net (i.e., learning the weights of the neural network), any new instance is just fed into the neural network and the output is the cutting plane(s) to be used for this instance. 
    This is, in principle, a much simpler computational problem than optimizing the combined auxiliary score functions over the space of cuts.

    \item Since we use the neural network to directly search for a good cut, bypassing the weighted combinations of auxuliary scores, we actually are able to find better cuts that reduce the tree sizes by a significant factor, compared to the approach of using weighted combinations auxiliary scores. 
\end{enumerate}

The above two points are clearly evidenced by our computational investigations which we present in Section~\ref{section:experiments}. The theoretical sample complexity bounds for cut selection are presented in Section~\ref{sec:bnc-sample-comp}. As mentioned before, these are obtained using the main sample complexity result for using neural networks for data driven algorithm design that is presented in Section~\ref{sec:neural-sample-comp}.

\paragraph{Comparison with prior work on cut selection using learning techniques.} As already discussed above, our theoretical sample complexity work in cut selection is closest in spirit to the work in~\cite{balcan2021much,balcan2021sample,balcan2021improved,balcan2022structural,balcan2018learning}. However, there are several other works in the literature that use machine learning ideas to approach the problem of cut selection; see~\cite{deza2023machine} for a survey. Tang et al. \cite{tang2020reinforcement} initiated the exploration of applying Reinforcement Learning (RL) to select CG cuts derived from the simplex tableau. Huang et al. \cite{huang2022learning} apply supervised learning to rank a ``bag of cuts'' from a set of cuts to reduce the total runtime. More recently, the authors in \cite{turner2023adaptive} propose a novel strategy that use RL and Graph Neural Networks (GNN) to select instance-dependent weights for the combination of auxiliary score functions. 

Our computational investigations, as detailed in \cref{section:experiments}, distinguishes itself from these prior computational explorations in several key aspects:
\begin{enumerate}
\item Previous methods were limited to a finite set of candidate cuts, requiring either an optimal simplex tableau or relying on a finite collection of combinatorial cuts. In contrast, our approach allows the possibility of selecting from a possibly infinite family of cutting planes.  
Moreover, our method eliminates the need for computing a simplex tableau which can lead to a significant savings in computation (see Table~\ref{tab:computational-speed} and the discussion in Section~\ref{section:experiment-results}). 
\item Many prior studies aimed at improving the objective value rather than directly reducing the branch-and-cut runtime—with the exception of Huang et al. \cite{huang2022learning}, who explored this through supervised learning. To the best of our knowledge, our RL-based model is the first to directly target the reduction of the branch-and-cut tree size as its reward metric, which is strongly correlated with the overall running time.  
\item Prior deep learning approaches are not underpinned by theoretical guarantees, such as sample complexity bounds. Our empirical work takes the theoretical insights for the branch-and-cut problem presented in \cref{thm:sample-comp} and \cref{thm:learnability-of-1-chavatal-cut} as its basis. 
\end{enumerate}

The limitations of our approach are discussed in \cref{sec:discussion}.

\section{Formal statement of results}\label{sec:neural-sample-comp}

We denote $ [d] $ as the set $ \{1, 2, \ldots, d\} $ for any positive integer $ d \in \Z_+ $. For a set of vectors $ \{\x^1, \ldots, \x^t\} \subseteq \R^d $, we use superscripts to denote vector indices, while subscripts specify the coordinates in a vector. For instance, $ \x^i_j $ refers to the $ j $-th coordinate of $ \x^i $. Additionally, the sign function, denoted as $\sgn: \R \rightarrow \{0,1\}$, is defined such that for any $x \in \R$, $\sgn(x) = 0$ if $x < 0$, and $1$ otherwise. This function is applied to each entry individually when applied to a vector. 
Lastly, the notation $\lfloor \cdot \rfloor$ is used to indicate the elementwise floor function, rounding down each component of a vector to the nearest integer.

\subsection{Preliminaries}
\subsubsection{Background from learning theory}

\begin{definition}[Parameterized function classes]\label{def:linear-function-class}
A {\em parameterized function class} is given by a function defined as 
$$h: \I \times \P \rightarrow \mathcal{O},$$where $\I$ represents the {\em input space}, $\P$ denotes the {\em parameter space}, and $\mathcal{O}$ denotes the {\em output space}. For any fixed parameter setting $\p \in \P$, we define a function $h_\p: \I \rightarrow \mathcal{O}$ as $h_\p(I) = h(I,\p)$ for all $I \in \I$. The set of all such functions defines the parameterized function class, a.k.a. the {\em hypothesis class $\H=\{h_\p : \I \rightarrow \mathcal{O} \mid \p\in\P\}$ defined by $h$}. 
\end{definition}

\begin{definition}[Pseudo-dimension]\label{def:pseudo-dimension}
    Let $\F$ be a non-empty collection of functions from an input space $\I$ to $\R$. 
    For any positive integer $t$, we say that a set $\{I_1, ..., I_t\} \subseteq \I$ is pseudo-shattered by $\F$ if there exist real numbers $s_1, \dots, s_t$ such that 
    \begin{equation*}\label{eq:def:pseudodimension}
        2^t = \left|\left\{ \left( \sgn(f(I_1)-s_1),\dots, \sgn(f(I_t)-s_t) \right) : f \in \F \right\}\right|. 
    \end{equation*}
    The \emph{pseudo-dimension of $\F$}, denoted as $\Pdim(\F)  \in \mathbb{N} \cup \{+\infty\}$, is the size of the largest set that can be pseudo-shattered by $\F$. 
    \end{definition}

The main goal of statistical learning theory is to solve a problem of the following form, given a fixed parameterized function class defined by some $h$ with output space $\O = \R$:
\begin{equation}\label{eq:learn}
    \min_{\p \in \P}\; \mathbb{E}_{I \sim \mathcal{D}}[h(I,\p)],
\end{equation} 
for an unknown distribution $\mathcal{D}$, given access to i.i.d. samples $I_1, \ldots, I_t$ from $\mathcal{D}$. In other words, one tries to ``learn" the best decision $\p$ for minimizing an expected ``score" with respect to an unknown distribution given only samples from the distribution. Such a ``best" decision can be thought of as a property of the unknown distribution and the problem is to ``learn" this property of the unknown distribution, only given access to samples. 

The following is a fundamental result in empirical processes theory and is foundational for the above learning problem; see, e.g., Chapters 17, 18 and 19 in~\cite{anthony1999neural}, especially Theorem 19.2.

\begin{theorem}\label{thm:sample-comp}
     There exists a universal constant $C$ such that the following holds. Let $\H$ be a hypothesis class defined by some $h : \I \times\P \to \R$ such that the range of $h$ is in $[0,B]$ for some $B>0$. For any distribution $\D$ on $\X$, $\epsilon > 0$, $\delta \in (0,1)$, and 
     $$t \geq \frac{CB^2}{\epsilon^2}\left(\Pdim(\H)\ln\left(\frac{B}{\epsilon}\right) + \ln\left(\frac{1}{\delta}\right)\right),$$
    we have $$\left|\frac{1}{t}\sum_{i=1}^t h(I_i,\p) - \mathbb{E}_{I \sim \mathcal{D}}\;[h(I,\p)]\right| \leq \epsilon \;\; \text{for all }\p \in \P,$$ with probability $1-\delta$ over i.i.d. samples $I_1, \ldots, I_t \in \I$ of size $t$ drawn from $\D$.
\end{theorem}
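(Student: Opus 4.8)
This statement is the classical uniform-convergence bound of empirical process theory, so the plan is to prove it along the standard symmetrization-plus-covering-number route; since no structure beyond the pseudo-dimension hypothesis is available, $\Pdim(\H)$ must be the only combinatorial quantity that enters. Write $g_\p(I) := h(I,\p) \in [0,B]$ and $\mu(\p) := \E_{I\sim\D}[g_\p(I)]$; the goal is to bound $\Prob\big(\sup_{\p\in\P}\big|\tfrac1t\sum_{i=1}^t g_\p(I_i) - \mu(\p)\big| > \epsilon\big)$. First I would carry out the usual \emph{ghost-sample symmetrization}: drawing an independent second sample $I_1',\dots,I_t'$ from $\D$, one shows via Chebyshev's inequality (using that each $g_\p$ has range in $[0,B]$, hence variance at most $B^2/4$) that, once $t$ exceeds the stated threshold for a suitably enlarged universal constant so that $t \ge 2B^2/\epsilon^2$, the event above has probability at most twice that of $\sup_{\p}\big|\tfrac1t\sum_i\big(g_\p(I_i)-g_\p(I_i')\big)\big| > \epsilon/2$.

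Next I would condition on the $2t$ points of the double sample and insert i.i.d. Rademacher signs $\sigma_i \in \{\pm1\}$ in front of each difference $g_\p(I_i)-g_\p(I_i')$; exchangeability of $(I_i,I_i')$ leaves the distribution unchanged, so it suffices to bound $\Prob_\sigma\big(\sup_\p |\tfrac1t\sum_i \sigma_i(g_\p(I_i)-g_\p(I_i'))| > \epsilon/2\big)$ for a fixed multiset $S$ of $2t$ points and then take expectation over $S$. The key technical input is then a covering-number estimate driven by the pseudo-dimension: on any finite point set, $\H$ admits an empirical $\ell_1$-cover at scale $\alpha$ of cardinality at most $e(d+1)(2eB/\alpha)^{d}$, where $d := \Pdim(\H)$. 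This follows by identifying $\Pdim(\H)$ with the VC dimension of the family of subgraphs $\{(I,y): y \le g_\p(I)\}$, applying the Sauer--Shelah lemma after discretizing the range $[0,B]$ into $\bigO{B/\alpha}$ levels, and counting sign patterns; I would either invoke Haussler's packing/covering bound directly or reproduce this chain.

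With $\alpha := c_0\epsilon$ and a minimal $\alpha$-cover $\H_\alpha$ of $\H$ on $S$ in hand, replacing an arbitrary $\p$ by its nearest cover element perturbs the empirical average by at most $2\alpha$, so it is enough to control the supremum over the finite set $\H_\alpha$. For each fixed cover element, $\tfrac1t\sum_i \sigma_i(g(I_i)-g(I_i'))$ is an average of $t$ independent, mean-zero variables bounded by $B$ in absolute value, so Hoeffding's inequality gives a tail of order $2\exp(-t\epsilon^2/(c_1 B^2))$; a union bound over $|\H_\alpha| \le e(d+1)(2eB/(c_0\epsilon))^{d}$, followed by back-substitution through the two symmetrization steps, yields $\Prob\big(\sup_\p|\tfrac1t\sum_i g_\p(I_i)-\mu(\p)| > \epsilon\big) \le 8e(d+1)(2eB/(c_0\epsilon))^{d}\exp(-t\epsilon^2/(c_1 B^2))$. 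Setting this to be at most $\delta$ and solving for $t$, while absorbing $\ln(8e(d+1))$, $\ln(2e/c_0)$ and similar constants into a single universal $C$, produces exactly the claimed threshold $t \ge \frac{CB^2}{\epsilon^2}\big(\Pdim(\H)\ln(B/\epsilon) + \ln(1/\delta)\big)$.

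The routine ingredients here are the two symmetrization steps and the Hoeffding-plus-union-bound endgame; the main obstacle is the covering-number estimate of the middle step, that is, converting the purely combinatorial pseudo-dimension into a metric entropy bound for a class of $[0,B]$-valued functions with the correct $(B/\epsilon)^{\Pdim(\H)}$ dependence. This is where the Sauer--Shelah machinery on the subgraph set system and the range discretization must be combined carefully, and it is the only place where the precise polynomial-in-$1/\epsilon$ growth of the cover --- and hence the single logarithmic factor $\ln(B/\epsilon)$ in the final bound on $t$ --- is earned.
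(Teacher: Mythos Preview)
The paper does not actually prove this theorem; it states it as a known foundational result and cites Chapters 17--19 of Anthony and Bartlett (in particular Theorem~19.2) for the proof. Your proposal outlines precisely the standard symmetrization--covering--Hoeffding argument that underlies that reference, so there is nothing to compare: your sketch is correct and is essentially the textbook proof the paper defers to.
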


Thus, if one solves the {\em sample average} problem $\min_{\p \in \P}\frac{1}{t}\sum_{i=1}^t h(I_i,\p)$ with a large enough sample to within $O(\epsilon)$ accuracy, the corresponding $\p$ would solve~\eqref{eq:learn} to within $O(\epsilon)$ accuracy (with high probability over the sample). Thus, the pseudo-dimension $\Pdim(\H)$ is a key parameter that can be used to bound the size of a sample that is sufficient to solve the learning problem. 

\subsubsection{Neural networks}

We formalize the definition of neural networks for the purposes of stating our results. Given any function $ \sigma: \R \rightarrow \R $, we will use the notation $ \sigma(\mathbf{x}) $ for $\x\in \R^d$ to mean $ [\sigma(\x_1), \sigma(\x_2), \ldots, \sigma(\x_d)]^\T \in \R^d $.

\begin{definition}[{Neural networks}]\label{def:DNN}
    Let $\sigma: \R \to \R$ and let $L$ be a positive integer. A \textit{neural network} with {\em activation} $\sigma$ and {\em architecture} $\bm{w} = [w_0,w_1,\dots,w_{L},w_{L+1}]^\T \in \Z_+^{L+2}$ is a paremterized function class, parameterized by $L+1$ affine transformations $\{T_i:\R^{w_{i-1}} \rightarrow \R^{w_{i}}$, $i \in [L+1]\}$ with $T_{L+1}$ linear, is defined as the function 
    \[T_{L+1}\circ \sigma \circ T_{L} \circ \cdots  T_2 \circ \sigma \circ T_1. \]
    $L$ denotes the number of hidden layers in the network, while $w_i$ signifies the width of the $i$-th hidden layer for $i \in [L]$. The input and output dimensions of the neural network are denoted by $w_0$ and $w_{L+1}$, respectively. If $T_i$ is represented by the matrix $A^i \in \R^{w_i\times w_{i-1}}$ and vector $\b^i \in \R^{w_i}$, i.e., $T_i(\x) = A^i \x + \b^i$ for $i \in [L+1]$, then the \textit{weights of neuron} $j \in [w_{i}]$ in the $i$-th hidden layer come from the entries of the $j$-th row of $A^i$ while the \textit{bias} of the neuron is indicated by the $j$-th coordinate of $\b^i$. 
    The \textit{size} of the neural network is defined as $w_1 + \cdots + w_L$, denoted by $U$.
    
    In the terminology of Definition~\ref{def:linear-function-class}, we define the neural network parameterized functions $N^\sigma: \R^{w_0} \times \R^{W} \to \R^{w_{L+1}}$, with $\R^{w_0}$ denoting the input space and $\R^W$ representing the parameter space. This parameter space is structured through the concatenation of all entries from the matrices $A^i$ and vectors $\b^i$, for $i \in [L+1]$, into a single vector of length $W$. The functions are defined as $N^\sigma(\x,\w) = T_{L+1}(\sigma(T_{L}( \cdots  T_2(\sigma( T_1(\x))) \cdots)))$ for any $\x \in \R^{w_0}$ and $\w \in \R^W$, where each $T_i$ represents the affine transformations associated with $\w \in \R^W$.

\end{definition}

In the context of this paper, we will focus on the following activation functions:
\begin{itemize}
    \item \textbf{sgn:} The {\em Linear Threshold} (LT) activation function $ \sgn : \R \rightarrow \{0,1\} $, which is defined as $ \sgn(x) = 0 $ if $ x < 0 $ and $ \sgn(x) = 1 $ otherwise. 

    \item \textbf{ReLU:} The {\em Rectified Linear Unit} (ReLU) activation function $ \ReLU : \R \rightarrow \R_{\geq 0} $ is defined as $ \ReLU(x) = \max\{0, x\} $.
    
    \item \textbf{CReLU:} The {\em Clipped Rectified Linear Unit} (CReLU) activation function $ \CReLU : \R \rightarrow [0,1] $ is defined as $ \CReLU(x) = \min\{\max\{0, x\}, 1\} $.
    
    \item \textbf{Sigmoid:} The {\em Sigmoid} activation function $ \Sigmoid : \R \rightarrow (0,1)$ is defined as $\Sigmoid(x) = \frac{1}{1 + e^{-x}}.$
\end{itemize}

\subsection{Our results}\label{section:learnability_for_general_algorithms}

In this study, we extend the framework introduced by Balcan et al. \cite{balcan2021much} to explore the learnability of tunable algorithmic parameters through neural networks. Consider a computational problem given by a family of instances $\I$. Let us say we have a suite of algorithms for this problem, parameterized by parameters in $\P$. We also have a {\em score function} that evaluates how well a particular algorithm, given by specific settings of the parameters, performs on a particular instance. In other words, the score function is given by $S: \I \times \P \rightarrow [0, B] $, where $B \in \R_+$ determines a priori upper bound on the score. The main goal of data-driven algorithm design is to find a particular algorithm in our parameterized family of algorithms -- equivalently, find a parameter setting $\p\in \P$ -- that minimizes the expected score on the family of instances with respect to an unknown distribution on $\I$, given access to a sample of i.i.d instances from the distribution. This then becomes a special case of the general learning problem~\eqref{eq:learn}, where $h = S$ and one can provide precise sample complexity bounds via \cref{thm:sample-comp}, if one can bound the pseudo-dimension of the corresponding hypothesis class. A bound on this pseudo-dimension is precisely the central result in~\cite{balcan2021much}; see the discussion in Section~\ref{sec:intro}.

We assume the parameter space $\P$ is a Cartesian product of intervals $[\eta_1, \tau_1] \times \cdots \times [\eta_\ell, \tau_\ell]$, where $\eta_i \leq \tau_i$ for each $i \in [\ell]$. The transformation from the instance space $\I$ to the parameter space $\P$ is structured through the following mappings:
\begin{enumerate}
\item An encoder function $\Enc: \I \rightarrow \R^d$ is defined to convert an instance $I \in \I$ into a vector $\x = \Enc(I) \in \R^d$, facilitating the instances to be suitably processed by a neural network. A simple example of such an encoder could be a compilation of all the instance's numerical data into a single vector; but one can allow encodings that use some predetermined features of the instances.
\item A family of neural network mappings, denoted as $N^\sigma: \R^d \times \R^W \rightarrow \R^\ell$, is utilized. These mappings are characterized by an activation function $\sigma$, and a fixed architecture represented by $\bm{w} = [d, w_1, \ldots, w_L, \ell] \in \mathbb{Z}_+^{L+2}$. For any given neural network parameters $\w \in \R^W$, this network maps an encoded instance $\x \in \R^d$ into $\y := N^\sigma(\x, \w) \in \R^\ell$.
\item A \emph{squeezing activation function}, $\sigma': \R \rightarrow [0,1]$, is introduced to adjust the neural network's output to the parameter space $\P$. The parameter $\p \in \P$ is computed by $\p_i = \eta_i + (\tau_i - \eta_i)\sigma'(\y_i)$ for $i = 1, \ldots, \ell$.
\end{enumerate}

The composite mapping from the instance space $\I$ to the parameter space $\P$ is denoted by $\varphi_{\w}^{N^\sigma, \sigma'}$, since the results of this study are applicable for any fixed and predetermined encoder function $\Enc$.

The problem of learning the best neural mapping then becomes the learning problem~\eqref{eq:learn} with $h:\I\times \R^W \to \R$ defined by $h(I,\w) := S\left(I, \varphi_\w^{{N^\sigma,\sigma'}}(I) \right)$. We use
\begin{equation*}\label{eq:learnability_general_function_class}
    \mathcal{F}_{N^\sigma, \sigma'}^S := \left\{ S \left( \cdot, \varphi_{\w}^{{N^\sigma,\sigma'}}(\cdot)\right) : \I \rightarrow [0, B] \mid \mathbf{w} \in \R^W \right\}
\end{equation*}
to denote the corresponding hypothesis class (Definition~\ref{def:linear-function-class}).

Our first result employs linear threshold neural networks for generating algorithm parameters, inspired by their analytically tractable structure and rich expressive capabilities, as supported by findings in \cite{khalife2023neural}. 

\begin{theorem}\label{thm:main_theorem_1}
    Consider a set $\I$ of instances of a computational problem with a suite of algorithms parameterized by $\P = [\eta_1,\tau_1]\times \cdots \times [\eta_\ell, \tau_\ell]$, with score function $ S: \I \times \P \rightarrow [0, B] $. Suppose that, for any given instance $I\in \I $, there exist at most $ \Gamma $ polynomials on $\R^\ell$, each of degree at most $ \gamma $, such that within each region of $\P$ where these polynomials have the same sign pattern, the function $S(I,\cdot)$ is a polynomial on $\R^\ell$ with degree at most $\lambda$. 
    For linear threshold neural networks $ N^{\sgn}: \R^d \times \R^W \rightarrow \R^\ell $ with a fixed architecture $\bm{w} = [d,w_1,\dots,w_L,\ell] \in \Z_+^{L+2}$, having size $U$ and $W$ parameters (\cref{def:DNN}), and using a Sigmoid squeezing function, we have
    \begin{equation*}
        \Pdim \left( \F^S_{N^{\sgn}, \Sigmoid} \right) = \O \left(W \log (U \gamma \Gamma (\lambda+1)) \right). 
    \end{equation*}
\end{theorem}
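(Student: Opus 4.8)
The plan is to bound the pseudo-dimension by counting the number of sign patterns that the functions $f_{\w}(I) - s$ can realize as $\w$ ranges over $\R^W$, for a fixed set of $t$ instances $I_1,\dots,I_t$ and thresholds $s_1,\dots,s_t$, and then invoke the standard combinatorial bound: if the number of realizable sign vectors is strictly less than $2^t$, the set cannot be shattered. So fix $I_1,\dots,I_t$ and set $\x^i = \Enc(I_i) \in \R^d$. Each $\x^i$ is a fixed point in the input space, and the whole analysis now happens in the neural parameter space $\R^W$.

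\textbf{Step 1: Decompose the parameter space by the activation patterns of the linear-threshold network.} For a fixed input $\x^i$, running it through the network $N^{\sgn}(\x^i,\cdot)$ involves evaluating $\sgn$ of an affine function of the incoming signal at each of the $U$ hidden neurons. Crucially, the first hidden layer's pre-activations are affine in $\w$ (for fixed $\x^i$), so each of the $w_1$ neurons in layer $1$ partitions $\R^W$ by a single hyperplane. Conditioning on the sign pattern of layer $1$, the layer-$1$ outputs become constants, so layer-$2$ pre-activations are again affine in the remaining weights, and so on. Iterating, the hidden-neuron activation pattern for input $\x^i$ partitions $\R^W$ into at most $\bigO{(t U / W)^W}$-type polyhedral cells via $U$ hyperplanes-per-input style counting — more precisely, across all $t$ inputs we have at most $tU$ affine threshold conditions, so by the Sauer–Shelah / hyperplane arrangement bound the number of cells is $\bigO{(tU)^W}$ (up to constants in the exponent; I'd write it as $2^{\bigO{W\log(tU)}}$). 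Within each such cell, the hidden activations are fixed for every input, so the network output $\y = N^{\sgn}(\x^i,\w)$ is an \emph{affine} function of $\w$ on that cell, for each $i$.

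\textbf{Step 2: Handle the squeezing function and the score's piecewise-polynomial structure.} Fix one cell $C$ from Step 1. On $C$, $\y^i := N^{\sgn}(\x^i,\w)$ is affine in $\w$, and the parameter is $\p^i_j = \eta_j + (\tau_j-\eta_j)\Sigmoid(\y^i_j)$. By the structural hypothesis on $S$, for each instance $I_i$ there are at most $\Gamma$ polynomials of degree $\le \gamma$ on $\R^\ell$ whose sign pattern determines which polynomial piece (degree $\le \lambda$) computes $S(I_i,\cdot)$. The key observation is that each of these conditions, pulled back through $\p^i = (\text{affine in }\w)$ composed with the coordinatewise monotone $\Sigmoid$, has the form: $\text{sgn}$ of $q(\Sigmoid(a_1^\T\w + b_1),\dots,\Sigmoid(a_\ell^\T\w+b_\ell))$ where $q$ is a polynomial of degree $\le\gamma$. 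Since $\Sigmoid$ is a composition of $\exp$ and rational operations, I want to reduce ``counting sign patterns of such functions'' to a tractable class. The cleanest route is Warren's theorem / the bound for sign patterns of Pfaffian functions (the $\exp$ function is Pfaffian of low complexity): the total number of sign patterns realizable by the $t\Gamma$ region-defining conditions, plus the $t$ conditions $\sgn(S(I_i,\w) - s_i)$ where on each sign-cell $S(I_i,\cdot)$ is a degree-$\le\lambda$ polynomial in the $\p^i_j = \eta_j+(\tau_j-\eta_j)\Sigmoid(\y^i_j)$, is at most $2^{\bigO{W\log(\gamma\Gamma(\lambda+1))}}$ per cell $C$ — using that all these are Pfaffian functions in $\w$ of format controlled by $W$, $\gamma$, $\lambda$, and the single exponential. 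Multiplying by the $2^{\bigO{W\log(tU)}}$ cells from Step 1 and the per-cell count gives a total sign-pattern count of $2^{\bigO{W\log(tU\gamma\Gamma(\lambda+1))}}$.

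\textbf{Step 3: Solve for $t$.} Shattering requires $2^t \le 2^{\bigO{W\log(tU\gamma\Gamma(\lambda+1))}}$, i.e. $t = \bigO{W\log(tU\gamma\Gamma(\lambda+1))}$; the standard ``$t \le a\log(tb)$ implies $t = \bigO{a\log(ab)}$'' lemma then yields $t = \bigO{W\log(U\gamma\Gamma(\lambda+1))}$, which bounds $\Pdim(\F^S_{N^{\sgn},\Sigmoid})$ as claimed. The main obstacle I anticipate is Step 2 — controlling the sign-pattern count when the $\Sigmoid$ (a genuinely transcendental function) is interposed between the affine network output and the polynomial structural conditions on $S$. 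One must either use the Pfaffian/Khovanskii sign-pattern machinery carefully (tracking that the ``Pfaffian chain'' length stays $\bigO{1}$ and degrees stay polynomial in $\gamma,\lambda$), or replace $\Sigmoid$ by an equivalent but polynomially-manageable encoding of the condition ``which cell of $\P$ we are in''; either way, ensuring the dependence on $t$ inside the log is only polynomial (so the fixed-point argument closes) is the delicate point. A secondary subtlety is making sure the layer-by-layer affineness argument in Step 1 is stated correctly: it is affine \emph{in the weights} only after conditioning on all earlier activations, which is exactly why the cell count is $(tU)^{\bigO{W}}$ rather than something worse.
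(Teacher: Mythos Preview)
Your Steps 1 and 3 are correct and match the paper. The genuine gap is Step 2, exactly where you flag uncertainty.

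The paper's key idea, which you do not identify, is a change of variables that eliminates the transcendental $\Sigmoid$ altogether. Once the hidden activations are frozen on a cell $C$, the last hidden layer output $\z^j$ is a \emph{constant $\{0,1\}$-vector} (this is where the $\sgn$ activation is used, not merely that the output is affine in $\w$). Hence
\[
\Sigmoid(A^{L+1}\z^j)_k \;=\; \frac{1}{1+\prod_{i} e^{-A^{L+1}_{ki}\,\z^j_i}},
\]
and substituting $\theta_{ki}:=e^{-A^{L+1}_{ki}}$ turns this into a \emph{rational function} of the $\theta_{ki}$ of degree $O(w_L)$, because $\z^j_i\in\{0,1\}$ forces $\theta_{ki}^{\z^j_i}\in\{1,\theta_{ki}\}$. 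The map $A^{L+1}\mapsto\theta$ is a bijection onto $(0,\infty)^{\ell w_L}$, so counting sign patterns in $\theta$-space suffices; the $\Gamma$ boundary polynomials (degree $\le\gamma$) and the degree-$\le\lambda$ pieces of $S(I_j,\cdot)$ pull back to rational conditions in $\theta$, and via $\sgn(P/Q)=\sgn(PQ)$ one is left with ordinary polynomial sign patterns, to which the standard Warren-type bound (the paper's Lemma on $|\{\text{sign vectors}\}|\le 2(2etm/\ell)^\ell$) applies directly.

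Your proposed alternative, invoking Khovanskii-type bounds for Pfaffian functions, does not close as written. The claim that the Pfaffian chain length ``stays $O(1)$'' is incorrect: naively there are $t\ell$ distinct sigmoids (one per instance per output coordinate), so the chain has length $t\ell$ and Khovanskii contributes a $2^{\binom{t\ell}{2}}$ factor, which swamps $2^t$ and the shattering inequality never closes. Even if you first notice that every such sigmoid is a rational function of the $\ell w_L$ basic exponentials $e^{-A^{L+1}_{ki}}$, the chain length is $\ell w_L$, and the resulting $2^{\binom{\ell w_L}{2}}$ factor yields only $\Pdim=O\bigl((\ell w_L)^2 + W\log(U\gamma\Gamma(\lambda+1))\bigr)$, weaker than the claimed bound. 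The point is that once you have made that observation you should go all the way: the substitution $\theta_{ki}=e^{-A^{L+1}_{ki}}$ reduces the problem to polynomials, and no Pfaffian machinery is needed.
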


In addition to this, we investigate the sample complexity associated with the use of $ \ReLU $ neural networks for parameter selection.
\begin{theorem}\label{thm:main_theorem_2}
    Under the same conditions as \cref{thm:main_theorem_1}, with ReLU neural networks $ N^{\ReLU}: \R^d \times \R^W \rightarrow \R^\ell $ having the same fixed architecture and clipped ReLU squeezing function, we have
    \begin{equation*}
        \Pdim \left( \F^S_{N^{\ReLU}, \CReLU} \right) = \O\left( LW \log(U+\ell) +W \log ( \gamma \Gamma (\lambda+1)) \right).
    \end{equation*}
\end{theorem}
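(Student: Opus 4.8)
I would bound $\Pdim(\F^S_{N^{\ReLU},\CReLU})$ by the classical route for function classes that are piecewise polynomial in their parameters. Fix a set $\{I_1,\dots,I_t\}\subseteq\I$ pseudo-shattered by $\F^S_{N^{\ReLU},\CReLU}$ with witness thresholds $s_1,\dots,s_t$, write $\x_j=\Enc(I_j)$ and $h(I_j,\w)=S\!\left(I_j,\varphi_\w^{N^{\ReLU},\CReLU}(I_j)\right)$, and (noting that the bound is immediate if $t<W$) assume $t\ge W$. It then suffices to upper bound the number $K$ of distinct sign vectors $\big(\sgn(h(I_1,\w)-s_1),\dots,\sgn(h(I_t,\w)-s_t)\big)$ realized as $\w$ ranges over $\R^W$, since pseudo-shattering forces $K=2^t$; solving $2^t\le K$ for $t$ then yields the bound. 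The whole argument rests on partitioning $\R^W$ into a controlled number of cells on each of which every map $\w\mapsto h(I_j,\w)$ coincides with a single polynomial of bounded degree, and then applying on each cell the classical bound on the number of sign patterns of a system of real polynomials (Warren / Milnor--Thom; see, e.g., \cite{anthony1999neural}): $m$ polynomials of degree $\le\Delta$ in $n$ variables realize at most $\big(O(m\Delta/n)\big)^n$ sign patterns when $m\ge n$.

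\paragraph{Partitioning $\R^W$: the network part.} With the $\x_j$ fixed, I would refine $\R^W$ layer by layer, in the spirit of the Bartlett--Harvey--Liaw--Mehrabian analysis of ReLU networks. The invariant is that on each cell of the current partition $\mathcal{R}_{i-1}$, the preactivations and activations of layers $1,\dots,i-1$ (for every instance $j$) are fixed polynomials in $\w$ of degree $\le i-1$. Within such a cell, the $w_i$ layer-$i$ preactivations are linear combinations of degree-$\le(i-1)$ polynomials with coefficients from $\w$, hence polynomials of degree $\le i$; refining the cell by the $tw_i$ sign conditions of these preactivations (over all $j$) multiplies the cell count by at most $\big(O(tw_i\,i/W)\big)^W$ and makes the layer-$i$ activations fixed polynomials of degree $\le i$. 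After the $L$ hidden layers the output $\y_j=N^{\ReLU}(\x_j,\w)\in\R^\ell$ is, on each cell, a vector of polynomials of degree $\le L+1$ ($T_{L+1}$ being linear). I would then treat the clipped-ReLU squeezing as one extra layer: refining by the $2t\ell$ sign conditions coming from the two breakpoints (at $0$ and at $1$) of $\CReLU$ on the $\ell$ output coordinates of each instance multiplies the cell count by at most $\big(O(t\ell(L+1)/W)\big)^W$, and on the refined cells each coordinate $\p_{j,i}(\w)=\eta_i+(\tau_i-\eta_i)\CReLU(\y_{j,i}(\w))$ is a polynomial of degree $\le L+1$. This ``phantom'' layer of $\ell$ neurons is the source of $U+\ell$ in the final bound.

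\paragraph{Partitioning $\R^W$: the score part, and conclusion.} For each $I_j$, I would substitute $\p=\p_j(\w)$ into the $\le\Gamma$ region-defining polynomials of $S(I_j,\cdot)$, each of degree $\le\gamma$ on $\R^\ell$; this yields $t\Gamma$ polynomials in $\w$ of degree $\le\gamma(L+1)$, and refining by their signs multiplies the cell count by at most $\big(O(t\Gamma\gamma(L+1)/W)\big)^W$ and fixes, on each resulting cell, the active polynomial piece of each $S(I_j,\cdot)$. On each final cell $\w\mapsto h(I_j,\w)$ is therefore a single polynomial of degree $\le\lambda(L+1)$, so the shifted polynomials $h(I_j,\w)-s_j$ realize at most $\big(O(t(\lambda+1)(L+1)/W)\big)^W$ sign patterns there. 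Multiplying all factors and using $\prod_{i=1}^L w_i\le(U/L)^L$ (AM--GM) to collapse the $L$ network stages into $\big(O(tU/W)\big)^{WL}$ gives
\[
K\;\le\;\Big(\tfrac{O(tU)}{W}\Big)^{WL}\cdot\Big(\tfrac{O(t\ell(L+1))}{W}\Big)^{W}\cdot\Big(\tfrac{O(t\Gamma\gamma(L+1))}{W}\Big)^{W}\cdot\Big(\tfrac{O(t(\lambda+1)(L+1))}{W}\Big)^{W}.
\]
Taking logarithms, imposing $2^t\le K$, and applying the standard fact that $x\le a\log_2(bx)$ forces $x=O(a\log_2(ab))$ (handling the additive lower-order terms by a case split, and using $L\le U$ to absorb the stray $\log L$, $\log t$, $\log W$ contributions into $WL\log(U+\ell)$) yields $t=O\!\big(LW\log(U+\ell)+W\log(\gamma\Gamma(\lambda+1))\big)$, the claimed pseudo-dimension bound.

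\paragraph{Main obstacle.} The delicate part is the layer-by-layer bookkeeping. One must verify that after each refinement the preactivations genuinely remain polynomials of the stated degree — the ``fixed polynomial on every cell'' invariant is exactly what keeps the degree growing linearly rather than exponentially in depth — count the number of sign conditions introduced at each stage correctly ($tw_i$ at hidden layer $i$, $2t\ell$ for the squeezing, $t\Gamma$ for the score regions), and, crucially, track which degree multiplies which count, so that the score-function contribution enters the final estimate as a single term $W\log(\gamma\Gamma(\lambda+1))$ (the score regions being refined only once) rather than as $WL\log(\cdots)$. A minor additional point is that $S(I,\cdot)$ may be discontinuous across its pieces, but this is harmless since the sign-pattern count is carried out cell by cell.
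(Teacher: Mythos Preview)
Your proposal is correct and follows essentially the same route as the paper: a Bartlett--Harvey--Liaw--Mehrabian style layer-by-layer partitioning of $\R^W$ so that the network output is polynomial of degree $O(L)$ on each cell, a further refinement by the $t\Gamma$ region polynomials of the score function, and then Warren's bound on sign patterns of polynomials on each cell, followed by solving $2^t\le K$ for $t$. The paper packages the network step into an auxiliary lemma (citing \cite{bartlett1998almost,bartlett2019nearly}) and handles the $\CReLU$ squeezing by rewriting $\CReLU(x)=\ReLU(x)-\ReLU(x-1)$ as one extra hidden layer of width $2\ell$, whereas you refine directly by the $2t\ell$ breakpoint conditions; this is a cosmetic difference (yours even keeps the degree at $L{+}1$ rather than $L{+}2$), and both yield the same asymptotic bound.
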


It is not hard to adapt the proofs of \cref{thm:main_theorem_1} and \cref{thm:main_theorem_2} to show that if any dimension of the parameter space is all of $\R$ rather than a bounded interval, the pseudo-dimension bounds will only be smaller, under the same conditions. Additionally, if $\P = \{\p_1,\dots,\p_r\}$ is a finite set, the problem can be viewed as a multi-classification problem. That is, consider a neural network $N^\sigma: \R^d \times \R^W \to \R^r$, where for any $\x \in \R^d$ and $\w \in \R^W$, $N^\sigma(\x,\w)$ outputs an $r$-dimensional vector, and we select the parameter corresponding to the largest dimension. The pseudo-dimension of this problem is given by the following: 
\begin{corollary}\label{cor:multi-classification}
    Under the same conditions as Theorem \ref{thm:main_theorem_1} and \ref{thm:main_theorem_2}, but with $\P = \{\p_1,\dots,\p_r\}$,
    \begin{equation*}
        \Pdim(\F^S_{N^{\sgn}, \Sigmoid}) = \O \left(W \log (U r) \right) \text{ and }
        \Pdim(\F^S_{N^{\ReLU}, \CReLU}) = \O\left( LW \log(U+r) \right).
    \end{equation*}
\end{corollary}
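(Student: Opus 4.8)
## Proof Proposal for Corollary~\ref{cor:multi-classification}

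The plan is to reduce the multi-classification setting to the parameterized-interval setting already handled in Theorems~\ref{thm:main_theorem_1} and~\ref{thm:main_theorem_2}, so that the corollary follows essentially by specializing the counting argument to a finite parameter set where the ``argmax'' selection replaces the squeezing map. First I would set up the hypothesis class: given a sample $I_1,\dots,I_t$ with encodings $\x^1,\dots,\x^t \in \R^d$ and threshold values $s_1,\dots,s_t$, I want to bound the number of distinct sign vectors $\bigl(\sgn(S(I_k,\p_{j^*(k,\w)}) - s_k)\bigr)_{k\in[t]}$ achievable as $\w$ ranges over $\R^W$, where $j^*(k,\w) = \argmax_{j\in[r]} N^\sigma(\x^k,\w)_j$. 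The key observation is that this sign vector is determined by two pieces of combinatorial data: (i) the ``winner pattern'' — for each sample $k$, which index $j\in[r]$ attains the maximum of the $r$ output coordinates — and (ii) for the selected parameter $\p_{j^*(k,\w)}$, the sign of $S(I_k,\cdot) - s_k$ evaluated at that fixed point $\p_{j^*(k,\w)}$. Since $\P$ is finite, once the winner pattern is fixed, each $S(I_k, \p_{j^*(k,\w)})$ is a fixed real number (no dependence on $\w$ at all), so piece (ii) contributes nothing further; the entire count is governed by the number of realizable winner patterns.

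Next I would bound the number of realizable winner patterns. Fixing the winner pattern amounts to choosing, for each $k\in[t]$, an index $j^*(k)$ and requiring $N^\sigma(\x^k,\w)_{j^*(k)} \geq N^\sigma(\x^k,\w)_j$ for all $j\neq j^*(k)$ — this is $t(r-1)$ additional inequality constraints, each of the form ``sign of (difference of two output coordinates of the network on a fixed input)''. For the LT (sgn) network I would invoke the same machinery used to prove Theorem~\ref{thm:main_theorem_1}: the output coordinates of a linear threshold network, as functions of $\w\in\R^W$ for fixed input, are piecewise polynomial (in fact piecewise linear in each region cut out by the threshold hyperplanes), and one counts regions of $\R^W$ on which all the relevant sign tests — the $tr$ internal threshold tests on the $t$ inputs plus the $t(r-1)$ output-comparison tests — are constant. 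By the Ostrowski/Warren-type bound (the same one underlying the $W\log(U\gamma\Gamma(\lambda+1))$ estimate, now with $\gamma,\lambda = O(1)$ and the effective number of ``functions'' being $O(Ut + rt)$), this region count is at most $2^{O(W\log(U r))}$ after accounting for $t = O(W\log(Ur))$; for the ReLU case the analogous argument gives the per-layer blowup $2^{O(LW\log(U+r))}$ exactly as in Theorem~\ref{thm:main_theorem_2}, again with the degree parameters set to constants and $\Gamma,\gamma$ dropped because there is no external polynomial region structure to account for — the score contributes only finitely many constants.

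The main obstacle — really the only nontrivial point — is making precise that adding the $t(r-1)$ output-comparison tests does not worsen the asymptotics: one must check that a ``difference of two output coordinates'' of the network has the same piecewise-polynomial complexity (same degree, comparable number of pieces in $\w$) as a single output coordinate, which is immediate for sgn-networks (differences of piecewise-linear functions are piecewise linear with at most the sum of the breakpoint sets) and for ReLU-networks follows because the breakpoint structure is determined layer-by-layer by the pre-activations, which are shared across output coordinates. Once this is in hand, I would finish by the standard pseudo-dimension argument: if $t$ points are pseudo-shattered then $2^t \leq (\text{number of realizable sign vectors}) \leq 2^{O(W\log(Ur))}$ (resp. $2^{O(LW\log(U+r))}$), forcing $t = O(W\log(Ur))$ (resp. $t = O(LW\log(U+r))$), which is exactly the claimed bound. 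I would remark that the disappearance of the $\gamma,\Gamma,\lambda$ factors relative to Theorems~\ref{thm:main_theorem_1}--\ref{thm:main_theorem_2} is precisely because the finiteness of $\P$ collapses the ``score is piecewise polynomial on $\P$'' hypothesis to ``score takes finitely many values on $\P$,'' contributing no new parameters-dependent structure.
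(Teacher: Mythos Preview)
Your proposal is correct and reaches the right bounds, but the paper takes a more modular route. Rather than reopening the proofs of Theorems~\ref{thm:main_theorem_1} and~\ref{thm:main_theorem_2} and splicing the argmax comparison tests into the region count, the paper defines an auxiliary score $S'(I,\x) := S\bigl(I,\p_{\argmax_i \x_i}\bigr)$ on $\I \times \R^r$, observes that for each fixed $I$ this $S'(I,\cdot)$ is piecewise constant on $\R^r$ with pieces cut out by the $\binom{r}{2}$ hyperplanes $\{\x : \x_i = \x_j\}$, and then invokes the two theorems as black boxes with $\ell = r$, $\Gamma = \binom{r}{2}$, $\gamma = 1$, $\lambda = 0$ (using the remark preceding the corollary that an unbounded parameter space only helps). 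Your approach unrolls this reduction and re-derives the counting from the underlying lemmas; this works, but it duplicates effort already packaged in the theorems. One small imprecision: to bound the number of realizable winner patterns you need the sign pattern of all $t\binom{r}{2}$ pairwise output differences, not just $t(r-1)$ of them, since you do not know in advance which coordinate wins; this does not affect the asymptotics because $\log\binom{r}{2} = O(\log r)$.
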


\section{Application to branch-and-cut}\label{sec:bnc-sample-comp}
\subsection{Preliminaries}
\begin{definition}[Integer linear programming (ILP)]\label{def:ILP}
    Let $m,n \in \N_+$ be fixed natural numbers, and let $A \in \Q^{m \times n},\ \b \in \Q^m,\ \c \in \R^n$. The integer linear programming problem is formulated as
    \begin{equation*}
        \max\{\c^\T \x : A \x \leq \b, \x \geq 0, \x \in \Z^n\}.
    \end{equation*}
\end{definition}

The most successful algorithms and solvers for integer programming problems are based on a methodology called {\em branch-and-cut}. In a branch-and-cut algorithm, one maintains two things in every iteration of the algorithm: 1) a current guess for the optimal solution, 2) a collection of polyhedra that are subsets of the original polyhedral relaxation of the ILP. In every iteration, one of these polyhedra are selected and the {\em continuous} linear programming (LP) solution for that selected polyhedron is computed. If the solution has objective value worse than the current guess, this polyhedron is discarded from the list and the algorithm moves to the next iteration. Otherwise, if the solution is integral, the guess is updated with this integral solution and this polyhedron is removed from further consideration. If the LP solution is not integral, one decides to either add some {\em cutting planes} or {\em branch}. In the former case, additional linear constraints are added to this polyhedron under consideration without eliminating any feasible solutions. In the latter case, one selects a fractional variable $\x_i$ in the LP solution and partitions the current polyhedron into two polyhedra by adding constraints $\x_i \leq \lfloor f_i \rfloor$ and $\x_i \geq \lfloor f_i \rfloor + 1$, where $f_i$ is the value of this fractional variable. The current polyhedron is then replaced in the list by these two new polyhedra. This entire process can be tracked by a {\em branch-and-cut tree} whose nodes are precisely the different polyhedra processed by the algorithm. The algorithm terminates when there are no more polyhedra left in the active list and the current guess is reported as the optimal solution. As is often done in practice, an a priori bound $B$ is set on the size of a tree; if this bound is exceeded by the algorithm at any stage, the algorithm exist early and the current guess for the solution is returned. 

The branch-and-cut tree size is a very good indication of how long the algorithm takes to solve the problem since the main time is spent on solving the individual LPs in the iterations of the algorithm. We will thus use the tree size as the ``score" function to decide how well branch-and-cut did on any instance.

There are many different strategies to generate cutting planes in branch-and-cut~\cite{conforti2014integer,nemhauser1988integer,sch}. We will focus on the so-called Chv\'atal-Gomory (CG) cutting planes and Gomory Mixed-Integer (GMI) cuts~\cite{conforti2014integer}. There are usually several choices of such cutting planes to add (and some families are even infinite in size~\cite{conforti2014integer}). We wish to apply the results of Section~\ref{sec:neural-sample-comp} to decide which cutting plane to select so that the branch-and-cut tree size is small.

\subsection{Learnability of parameterized CG cut(s)}

Let $m, n$ be positive integers. We consider the ILP instance space $\I \subseteq \{(A,\b,\c) : A \in \Q^{m \times n}, \b \in \Q^m, \c \in \R^n\}$, along with a fixed encoder function $\Enc: \I \rightarrow \R^d$. A simple encoder might stack all elements of $(A,\b,\c) \in \I$ into a single vector of length $d = mn + m + n$. We also impose the conditions that $\sum_{i=1}^m \sum_{j=1}^n |A_{ij}| \leq a$ and $\sum_{i=1}^m |\b_i|  \leq b$ for any $(A,\b,\c) \in \I$.

Following the discussion in \cite{balcan2021sample}, we define $f_{\CG}(I,\u)$ as the size of the branch-and-bound tree for a given ILP instance $I \in \I$ with a CG cut parameterized by a multiplier $\u \in [0,1]^m$ added at the root. We interpret $f_{\CG}$ as a score function elaborated in \cref{section:learnability_for_general_algorithms}. The piecewise structure of $f_{\CG}$ in its parameters is characterized by:

\begin{lemma}[Lemma 3.2 in \cite{balcan2021sample}]\label{lemma:bound-on-the-number-of-hyperplanes}
    For any ILP instance $I \in \I$, there are at most $M := 2(a + b + n)$ hyperplanes partitioning the parameter space $[0,1]^m$ into regions where $f_{\CG}(I,\u)$ remains constant for all $\u$ within each region. 
\end{lemma}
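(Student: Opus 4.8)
\emph{Proof plan.} The idea is to use the explicit form of a Chv\'atal--Gomory cut. For a multiplier $\u \in [0,1]^m$ applied to the constraint system $A\x \le \b$ of $I$, the cut added at the root is
\[
\sum_{j=1}^{n} \lfloor (\u^\T A)_j \rfloor\, \x_j \ \le\ \lfloor \u^\T \b \rfloor ,
\]
so the cut, the resulting augmented ILP, and hence the entire branch-and-cut tree it produces (the branch-and-cut routine being fixed and deterministic throughout) depend on $\u$ only through the integer vector $\big(\lfloor(\u^\T A)_1\rfloor,\dots,\lfloor(\u^\T A)_n\rfloor,\ \lfloor\u^\T\b\rfloor\big)\in\Z^{n+1}$. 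Therefore it suffices to find a family of at most $M$ hyperplanes whose induced subdivision of $[0,1]^m$ refines the level sets of this vector-valued map.

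First I would fix a column index $j\in[n]$ and consider the affine function $\u\mapsto(\u^\T A)_j=\sum_{i=1}^m A_{ij}\u_i$. On the box $[0,1]^m$ it takes values in the interval between $\sum_{i:A_{ij}<0}A_{ij}$ and $\sum_{i:A_{ij}>0}A_{ij}$, whose length equals $\sum_{i=1}^m|A_{ij}|$. Hence at most $\sum_i |A_{ij}|+1$ integers $k$ give a hyperplane $\{\u:\sum_i A_{ij}\u_i=k\}$ that meets $[0,1]^m$, and $\lfloor(\u^\T A)_j\rfloor$ is determined by which of these hyperplanes $\u$ lies on or between. Summing over $j\in[n]$ yields at most $\sum_{i,j}|A_{ij}|+n\le a+n$ hyperplanes. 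The same reasoning applied to $\u\mapsto\u^\T\b=\sum_i\b_i\u_i$, whose range over $[0,1]^m$ has length $\sum_i|\b_i|\le b$, contributes at most $b+1$ further hyperplanes.

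It remains to check that the resulting arrangement works. These $\le(a+n)+(b+1)$ hyperplanes subdivide $[0,1]^m$ into finitely many relatively open faces; on each face and for each $j$, the continuous function $(\u^\T A)_j$ either equals a fixed integer (if the face lies on one of the hyperplanes) or, by connectedness, stays inside a single open unit interval, so $\lfloor(\u^\T A)_j\rfloor$ is constant there, and likewise $\lfloor\u^\T\b\rfloor$. Thus $f_{\CG}(I,\cdot)$ is constant on each face. Finally, since $n\ge1$ we have $a+b+n\ge1$, so the number of hyperplanes used is $(a+n)+(b+1)=a+b+n+1\le 2(a+b+n)=M$ (pad with arbitrary hyperplanes if an arrangement of exactly $M$ is desired), which proves the claim.

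The one step that needs care, rather than routine computation, is the first one: one must make explicit the (standard) assumption that the branch-and-cut procedure is deterministic — fixed node-selection, branching-variable, and tie-breaking rules — so that the tree, and therefore the score, are a well-defined function of the instance together with the single added cut. Everything after that is an elementary count of integer level sets of affine functions over a box, and degeneracies such as zero columns or coincident hyperplanes only decrease the count.
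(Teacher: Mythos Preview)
The paper does not give its own proof of this lemma; it is simply quoted as Lemma~3.2 of \cite{balcan2021sample}. Your argument is correct and is essentially the standard one from that reference: the CG cut, and hence the entire deterministic branch-and-cut execution, depends on $\u$ only through the integer vector $\big(\lfloor(\u^\T A)_1\rfloor,\dots,\lfloor(\u^\T A)_n\rfloor,\lfloor\u^\T\b\rfloor\big)$, and each coordinate of this vector changes only across the integer level sets of a linear functional whose range over $[0,1]^m$ has length bounded by the corresponding column (or right-hand-side) $\ell_1$-norm. Your count of at most $a+n$ hyperplanes for the $n$ columns and $b+1$ for the right-hand side, together with $a+b+n+1\le 2(a+b+n)$, matches the bound $M=2(a+b+n)$ exactly as stated.
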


Applying \cref{thm:main_theorem_1} and \cref{thm:main_theorem_2} to $f_{\CG}$ yields these pseudo-dimension bounds:
\begin{proposition}\label{thm:learnability-of-1-chavatal-cut}
    Under the same conditions as Theorem \ref{thm:main_theorem_1} and \ref{thm:main_theorem_2}, with the score function $f_{\CG}$, 
    \begin{align*}
        \Pdim \left( \mathcal{F}_{N^{\sgn}, \Sigmoid}^{f_{\CG}} \right) &= \mathcal{O} \left( W \log (UM)  \right),\\
        \Pdim \left( \mathcal{F}_{N^{\ReLU}, \CReLU}^{f_{\CG}} \right) &= \mathcal{O}\left( LW \log (U+m) + W \log  M  \right).
    \end{align*}
\end{proposition}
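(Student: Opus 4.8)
This proposition is a direct instantiation of \cref{thm:main_theorem_1} and \cref{thm:main_theorem_2}, so the plan is simply to identify the parameters $\ell,\Gamma,\gamma,\lambda$ appearing in those theorems for the score function $f_{\CG}$ and substitute. Here the algorithmic parameter is the CG multiplier $\u\in[0,1]^m$, so the parameter space is $\P=[0,1]^m$ and hence $\ell=m$; the encoder $\Enc$, the architecture $\bm w=[d,w_1,\dots,w_L,m]$, the size $U$, and the parameter count $W$ are the same as in the hypotheses of the two main theorems. It remains to exhibit the required piecewise-polynomial structure of $f_{\CG}(I,\cdot)$ on $\R^m$.

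The structural input is \cref{lemma:bound-on-the-number-of-hyperplanes}: for every instance $I\in\I$ there are at most $M=2(a+b+n)$ affine hyperplanes whose arrangement partitions $[0,1]^m$ into cells on each of which $f_{\CG}(I,\cdot)$ is constant. I would recast this in the language demanded by \cref{thm:main_theorem_1} by letting $p_1,\dots,p_M\colon\R^m\to\R$ be the affine functions defining these hyperplanes. Each $p_i$ is a polynomial of degree $\gamma:=1$, there are $\Gamma:=M$ of them, and on any region of $\R^m$ where the sign pattern $(\sgn p_1(\u),\dots,\sgn p_M(\u))$ is fixed the point $\u$ lies in a single cell of the arrangement, so $f_{\CG}(I,\cdot)$ coincides there with a constant, i.e.\ a polynomial of degree $\lambda:=0$. (The hypothesis that the range of the score lies in $[0,B]$ is met because of the a priori cap $B$ on tree size built into the branch-and-cut procedure described in \cref{sec:bnc-sample-comp}.) Thus $f_{\CG}$ satisfies the hypotheses of both main theorems with $(\ell,\Gamma,\gamma,\lambda)=(m,M,1,0)$.

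Substituting into \cref{thm:main_theorem_1} gives $\Pdim(\mathcal{F}^{f_{\CG}}_{N^{\sgn},\Sigmoid})=\mathcal{O}(W\log(U\gamma\Gamma(\lambda+1)))=\mathcal{O}(W\log(UM))$, and substituting into \cref{thm:main_theorem_2} gives $\Pdim(\mathcal{F}^{f_{\CG}}_{N^{\ReLU},\CReLU})=\mathcal{O}(LW\log(U+\ell)+W\log(\gamma\Gamma(\lambda+1)))=\mathcal{O}(LW\log(U+m)+W\log M)$, which are exactly the claimed bounds. There is no substantial obstacle here; the only point needing a line of care is the translation between the hyperplane-arrangement statement of \cref{lemma:bound-on-the-number-of-hyperplanes} and the "fixed sign pattern of $\Gamma$ polynomials" formulation of the main theorems, together with the observation that "constant on each cell" is precisely the degenerate case $\lambda=0$, so that the factor $(\lambda+1)$ in the bounds contributes nothing.
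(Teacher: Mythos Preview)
Your proposal is correct and follows exactly the approach the paper intends: the proposition is stated immediately after \cref{lemma:bound-on-the-number-of-hyperplanes} with the phrase ``Applying \cref{thm:main_theorem_1} and \cref{thm:main_theorem_2} to $f_{\CG}$ yields these pseudo-dimension bounds,'' and your identification $(\ell,\Gamma,\gamma,\lambda)=(m,M,1,0)$ together with the substitution into the two theorems is precisely that application.
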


Extending this to adding $k$ CG cuts sequentially, we define $f^k_{\CG}(I,(\u_1,\dots,\u_k))$ as the branch-and-bound tree size after adding a sequence of $k$ CG cuts parameterized by $\u_1,\dots,\u_k$ at the root for a given ILP instance $I \in \I$. The piecewise structure of $f^k_{\CG}$ in its parameters is given by:
\begin{lemma}[Lemma 3.4 in \cite{balcan2021sample}]\label{lemma:bound-on-the-number-of-hyperplanes-for-k-chavatal-cuts}
    For any ILP instance $I \in \I$, there are $\O(k2^k M)$ multivariate polynomials with $mk+k(k-1)/2$ variables and degree at most $k$ partitioning the parameter space $[0,1]^{mk+k(k-1)/2}$ into regions where $f^k_{\CG}(I,(\u_1,\dots,\u_k))$ remains constant for all $(\u_1,\dots,\u_k)$ within each region.
\end{lemma}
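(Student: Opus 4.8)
The plan is to induct on the number of cuts $k$, building the partition of the parameter space one cut at a time. First, note that the branch-and-cut tree, and hence $f^k_{\CG}(I,(\u_1,\dots,\u_k))$, is completely determined by the sequence of integer cutting planes actually produced: set $A^{(0)}:=A$, $\b^{(0)}:=\b$, and for $j=1,\dots,k$ let the $j$-th cut be the row $r^{(j)}:=\lfloor \u_j^\T A^{(j-1)}\rfloor$ with right-hand side $s^{(j)}:=\lfloor \u_j^\T \b^{(j-1)}\rfloor$, where $\u_j\in[0,1]^{m+j-1}$ is allowed to put weight on all $m+j-1$ rows of the currently augmented system $A^{(j-1)}\x\le \b^{(j-1)}$ (this accounts for the $\sum_{j=1}^k(m+j-1)=mk+k(k-1)/2$ variables). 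Once $(r^{(1)},s^{(1)}),\dots,(r^{(k)},s^{(k)})$ are fixed, the augmented polyhedron at the root is fixed and so is the entire tree. Hence it suffices to exhibit $\O(k2^kM)$ polynomials of degree $\le k$ whose sign pattern determines all of $r^{(1)},s^{(1)},\dots,r^{(k)},s^{(k)}$.

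For $k=1$ this is exactly \cref{lemma:bound-on-the-number-of-hyperplanes}: there are $M=2(a+b+n)$ hyperplanes (degree-$1$ polynomials) of the form $\u_1^\T A_p = t$ and $\u_1^\T \b = t$, $t\in\Z$, on each region of which $(r^{(1)},s^{(1)})$ is constant, the range of relevant integers $t$ being bounded because $|\u_1^\T A_p|\le \sum_i|A_{ip}|$ with $\sum_p\sum_i|A_{ip}|\le a$, and $|\u_1^\T\b|\le b$. For the inductive step, suppose we already have a family $\mathcal P_{j-1}$ of $\O((j-1)2^{j-1}M)$ polynomials of degree $\le j-1$ such that $(r^{(1)},s^{(1)}),\dots,(r^{(j-1)},s^{(j-1)})$ are constant on each cell of the arrangement of $\mathcal P_{j-1}$. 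On such a cell the rows of $A^{(j-1)}$ are fixed, so the pre-floor coefficient vector $\rho^{(j)}(\u):=\u_j^\T A^{(j-1)}$ and the scalar $\u_j^\T\b^{(j-1)}$ are affine in $\u_j$ there; one then adds, for each column $p$ and each integer $t$ in the appropriate range, the surface $\rho^{(j)}_p(\u)=t$. To obtain a single arrangement valid on all cells simultaneously, one substitutes the recursive definitions of the previously generated rows back into $\rho^{(j)}_p$, turning it into a genuine polynomial in $(\u_1,\dots,\u_j)$ whose total degree increases by one per level of nesting and is therefore at most $j\le k$. The magnitudes of the coefficients at most double at each level (the $\ell_1$-mass of a new cut row is at most that of the rows it is built from, up to rounding), so $|\rho^{(j)}_p|\le 2^j\sum_i|A_{ip}|$ on $[0,1]^{m+j-1}$; summing the resulting number of relevant $t$'s over $p$ and over $j$, and including the right-hand sides, contributes the claimed factor $\O(2^k)$ per round and $\O(k2^kM)$ in total, giving $\mathcal P_k$.

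The main obstacle is precisely this last step: turning the surfaces that resolve the $j$-th cut into genuine low-degree polynomials that are valid \emph{globally} rather than merely affine on each cell. The difficulty is that $\rho^{(j)}_p$, as a function of all multipliers at once, is only piecewise affine because of the nested floors, so one cannot literally take its level sets. The resolution is to verify that on every cell of the arrangement generated by $\mathcal P_{j-1}$ together with the substituted (un-floored) level sets, all the floors $\lfloor\rho^{(l)}_p(\cdot)\rfloor$ with $l<j$ are already constant, so that the substituted polynomial and the true piecewise-affine coefficient induce the same refinement there; one must also control the bounded ``correction'' offset between the un-floored polynomial and the true cut (it is at most $j$, since each of the $<j$ nested floors and the outer floor perturbs the value by less than $1$), so that it is absorbed into the $\O(2^k)$ count. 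Carrying out this bookkeeping carefully — in particular checking that degree $k$ and the constant $2^k$, rather than something larger, really suffice — is the technical heart of the argument; the remainder is the routine induction and the integer-level-set counting exactly as in the proof of \cref{lemma:bound-on-the-number-of-hyperplanes}.
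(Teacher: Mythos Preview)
The paper does not supply its own proof of this lemma; it is quoted verbatim from \cite{balcan2021sample}, so there is no in-paper argument to compare against. That said, your sketch contains a genuine gap at exactly the point you flag as the ``main obstacle'', and your proposed resolution does not close it.

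Your claim is that on every cell of the arrangement generated by $\mathcal P_{j-1}$ together with the integer level sets of the un-floored surrogate $\tilde\rho^{(j)}_p$, the true floored coefficient $\lfloor\rho^{(j)}_p\rfloor$ is constant. This is false. Take $m=n=1$, $A=3$, so $r^{(1)}=\lfloor 3u_1\rfloor$. For the second cut one has $\rho^{(2)}=3(u_2)_1+r^{(1)}(u_2)_2$ while the un-floored substitute is $\tilde\rho^{(2)}=3(u_2)_1+3u_1(u_2)_2$. At the two points $(u_1,(u_2)_1,(u_2)_2)=(0.4,\,0.3,\,0.09)$ and $(0.4,\,0.3,\,0.10)$ we are in the same cell of $\mathcal P_1$ (both give $r^{(1)}=1$) and in the same cell of the $\tilde\rho^{(2)}$ arrangement ($\tilde\rho^{(2)}=1.008$ and $1.02$, so $\lfloor\tilde\rho^{(2)}\rfloor=1$ at both), yet $\lfloor\rho^{(2)}\rfloor$ equals $0$ at the first point and $1$ at the second. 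So the combined arrangement does \emph{not} determine the second cut. The underlying reason is that the ``correction offset'' $\tilde\rho^{(j)}_p-\rho^{(j)}_p$ is not a constant on a cell of $\mathcal P_{j-1}$: in the example it equals $(3u_1-\lfloor 3u_1\rfloor)(u_2)_2$, which still varies with $(u_1,u_2)$. Being bounded by $j$ is irrelevant, because enlarging the range of integer thresholds $t$ only shifts the same family of surfaces; it is their \emph{shape} that is wrong, not their count.

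Your inductive skeleton, the variable count $mk+k(k-1)/2$, and the doubling bound on $|r^{(j)}_p|$ that yields the $2^k$ factor are all correct and match the source. What is missing is a polynomial family whose sign pattern pins down $\lfloor\rho^{(j)}_p\rfloor$ itself rather than $\lfloor\tilde\rho^{(j)}_p\rfloor$; the construction in \cite{balcan2021sample} does this differently, and you should consult that proof directly rather than rely on the un-floored substitution.
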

Accordingly, the pseudo-dimension bounds are
\begin{proposition}\label{thm:learnability-of-k-chavatal-cuts}
    Under the same conditions as Theorem \ref{thm:main_theorem_1} and \ref{thm:main_theorem_2}, with the score function $f_{\CG}^k$,
    {\begin{align*}
        \Pdim &\left( \mathcal{F}_{N^{\sgn}, \Sigmoid}^{f_{\CG}^k} \right) = \O\left( W \log (UM) + Wk \right),\\
        \Pdim &\left( \mathcal{F}_{N^{\ReLU}, \CReLU}^{f_{\CG}^k} \right) = \O\left( LW \log (U+mk) + W\log M + Wk \right).
    \end{align*}}
\end{proposition}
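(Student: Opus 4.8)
The plan is to derive both bounds as immediate specializations of the general pseudo-dimension estimates \cref{thm:main_theorem_1,thm:main_theorem_2}, instantiated at the score function $S = f^k_{\CG}$, the parameter space $\P = [0,1]^{mk+k(k-1)/2}$ (so $\ell = mk + k(k-1)/2$, with all $\eta_i = 0$ and $\tau_i = 1$), and a neural architecture $\bm{w} = [d, w_1,\dots,w_L,\ell]$ whose output layer has $\ell$ neurons, so that $N^\sigma$ produces, after squeezing, the $k$ stacked CG-cut multipliers. The only input specific to this setting is the piecewise structure of $f^k_{\CG}$, and that is exactly what \cref{lemma:bound-on-the-number-of-hyperplanes-for-k-chavatal-cuts} supplies.

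First I would read off from \cref{lemma:bound-on-the-number-of-hyperplanes-for-k-chavatal-cuts} the data required by the hypotheses of \cref{thm:main_theorem_1,thm:main_theorem_2}: for every fixed $I \in \I$ there are $\Gamma = \O(k 2^k M)$ polynomials on $\R^\ell$, each of degree at most $\gamma = k$, whose common sign pattern is constant on each region of a partition of $\P$ on which $f^k_{\CG}(I,\cdot)$ is itself \emph{constant}, i.e.\ a polynomial of degree $\lambda = 0$. Substituting $(\ell,\Gamma,\gamma,\lambda)$ into \cref{thm:main_theorem_1} gives $\Pdim(\mathcal{F}^{f^k_{\CG}}_{N^{\sgn},\Sigmoid}) = \O\big(W\log(U\gamma\Gamma(\lambda+1))\big) = \O\big(W\log(U k^2 2^k M)\big)$, and since $\log(U k^2 2^k M) = \log(UM) + 2\log k + k\log 2 = \O(\log(UM) + k)$, this is $\O(W\log(UM) + Wk)$. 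Likewise \cref{thm:main_theorem_2} gives $\O\big(LW\log(U+\ell) + W\log(\gamma\Gamma(\lambda+1))\big)$; here $\ell = mk + k(k-1)/2$, and since $k(k-1)/2 \le (mk)^2$ we get $\log(U+\ell) = \O(\log(U+mk))$, while $\log(\gamma\Gamma(\lambda+1)) = \log(k^2 2^k M) = \O(\log M + k)$, so the bound collapses to $\O(LW\log(U+mk) + W\log M + Wk)$, exactly as claimed.

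The only points needing (routine) care are: the $\O(\cdot)$ hidden in the count $\Gamma$ is harmless because $\Gamma$ enters only through $\log\Gamma$; the elementary logarithm simplifications above, in particular $\log(mk + k(k-1)/2) = \O(\log(U+mk))$; and keeping straight which network the conclusion refers to — the one with output dimension $\ell = mk + k(k-1)/2$, so that $W$, $U$, $L$ are its parameter count, size, and depth. I do not expect any substantive obstacle here: all of the genuine difficulty has already been absorbed into \cref{lemma:bound-on-the-number-of-hyperplanes-for-k-chavatal-cuts} (Lemma 3.4 of \cite{balcan2021sample}, which tracks how the partition of multiplier space evolves as the $k$ cuts are added sequentially, including the extra $k(k-1)/2$ multiplier coordinates) and into the proofs of \cref{thm:main_theorem_1,thm:main_theorem_2}, both of which we may invoke as given.
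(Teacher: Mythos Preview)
Your proposal is correct and matches the paper's approach: the paper presents \cref{thm:learnability-of-k-chavatal-cuts} as an immediate consequence of plugging the piecewise structure from \cref{lemma:bound-on-the-number-of-hyperplanes-for-k-chavatal-cuts} (namely $\Gamma=\O(k2^kM)$, $\gamma=k$, $\lambda=0$, $\ell=mk+k(k-1)/2$) into \cref{thm:main_theorem_1,thm:main_theorem_2}, and your arithmetic simplifications are exactly what is needed.
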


\subsection{Learnability of cutting plane(s) from a finite set}

The selection of an optimal cut from an infinite pool of candidate cuts, as discussed in \cref{thm:learnability-of-1-chavatal-cut} and \cref{thm:learnability-of-k-chavatal-cuts}, is often difficult and inefficient in practice. Consequently, a popular way is to select cuts based on information from the simplex tableau (such as GMI cuts), as well as some combinatorial cuts, which inherently limit the number of candidate cuts considered to be finite.

Suppose we have a finite set of cuts $\mathcal{C}$, and we define $f_{\ROW}(I,c)$ as the branch-and-bound tree size after adding a cut $c \in \mathcal{C}$ at the root for a given ILP instance $I \in \I$. Then \cref{cor:multi-classification} implies
\begin{proposition}\label{prop:learnability-of-finite-cuts}
    Under the same conditions as Theorem \ref{thm:main_theorem_1} and \ref{thm:main_theorem_2}, with the score function $f_{\ROW}$, 
    \begin{equation*}
        \Pdim \left( \mathcal{F}_{N^{\sgn}, \Sigmoid}^{f_{\ROW}} \right) = \O\left( W \log (U |\mathcal{C}|) \right) \text{ and }
        \Pdim \left( \mathcal{F}_{N^{\ReLU}, \CReLU}^{f_{\ROW}} \right) = \O\left( LW \log (U+|\mathcal{C}|) \right).
    \end{equation*}
\end{proposition}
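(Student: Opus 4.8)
The plan is to derive Proposition~\ref{prop:learnability-of-finite-cuts} as a direct instantiation of Corollary~\ref{cor:multi-classification}. Since $\mathcal{C} = \{c_1,\dots,c_{|\mathcal{C}|}\}$ is finite, I would set $r := |\mathcal{C}|$, take the parameter space to be $\P = \mathcal{C}$ itself, and let the score function be $S := f_{\ROW}$. This is exactly the finite-parameter (multi-classification) setup described just before Corollary~\ref{cor:multi-classification}: the neural network $N^\sigma : \R^d \times \R^W \to \R^{|\mathcal{C}|}$ reads the encoding $\x = \Enc(I)$ of an instance, outputs a $|\mathcal{C}|$-dimensional vector, and the cut actually applied to $I$ at the root is $c_{j^\ast}$ where $j^\ast \in \argmax_j N^\sigma(\x,\w)_j$, so that the score is $f_{\ROW}(I, c_{j^\ast})$. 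Hence $\mathcal{F}^{f_{\ROW}}_{N^{\sgn},\Sigmoid}$ and $\mathcal{F}^{f_{\ROW}}_{N^{\ReLU},\CReLU}$ are precisely the hypothesis classes to which Corollary~\ref{cor:multi-classification} applies with $r = |\mathcal{C}|$.

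Next I would check the hypotheses needed to invoke the corollary. The boundedness requirement $f_{\ROW} : \I \times \mathcal{C} \to [0,B]$ is immediate from the branch-and-cut setup described in the preliminaries of Section~\ref{sec:bnc-sample-comp}: the algorithm is run with an a priori cap $B$ on the tree size, so the reported tree size always lies in $[0,B]$, whichever cut from $\mathcal{C}$ is added at the root. The architectural hypotheses inherited from Theorems~\ref{thm:main_theorem_1} and~\ref{thm:main_theorem_2} — a fixed architecture $\bm{w} = [d, w_1,\dots,w_L, |\mathcal{C}|]$ of size $U$ with $W$ parameters, Sigmoid (resp.\ clipped ReLU) squeezing — are part of the statement of the proposition and require nothing further. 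The only point worth spelling out is that the piecewise-polynomial condition on $S(I,\cdot)$ that appears in Theorems~\ref{thm:main_theorem_1} and~\ref{thm:main_theorem_2} plays no role here: because $\mathcal{C}$ is finite, $S(I,\cdot)$ restricted to $\mathcal{C}$ is an arbitrary function on finitely many points, and the argument behind Corollary~\ref{cor:multi-classification} controls the number of realizable sign patterns using only the argmax structure of the $|\mathcal{C}|$ network outputs as $\w$ ranges over $\R^W$, never invoking any regularity of $S$ in its second argument (equivalently, one may regard that condition as vacuously satisfied). So there is nothing to verify about $f_{\ROW}$ beyond boundedness.

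With these checks in place, Corollary~\ref{cor:multi-classification} yields $\Pdim(\mathcal{F}^{f_{\ROW}}_{N^{\sgn},\Sigmoid}) = \O(W\log(U|\mathcal{C}|))$ and $\Pdim(\mathcal{F}^{f_{\ROW}}_{N^{\ReLU},\CReLU}) = \O(LW\log(U+|\mathcal{C}|))$, which is exactly the claim. I do not expect a genuine obstacle: essentially all the content resides in Corollary~\ref{cor:multi-classification}, and through it in Theorems~\ref{thm:main_theorem_1} and~\ref{thm:main_theorem_2}. The only things to be careful about are (i) matching $f_{\ROW}$ to the score-function template and the network output dimension $r$ to $|\mathcal{C}|$, and (ii) confirming the boundedness of $f_{\ROW}$ via the tree-size cap $B$ — both routine.
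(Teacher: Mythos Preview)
Your proposal is correct and matches the paper's own argument: the paper simply states that Corollary~\ref{cor:multi-classification} implies Proposition~\ref{prop:learnability-of-finite-cuts}, which is exactly what you do by taking $r = |\mathcal{C}|$ and $S = f_{\ROW}$.
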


\subsection{Learnability of cut selection policy}\label{sec:learn-cut-selection-policy}
One of the leading open-source solvers, SCIP \cite{gamrath2020scip}, uses cut selection methodologies that rely on combining several auxiliary score functions. For a finite set of cutting planes $\mathcal{C}$, instead of directly using a neural network for selection, this model selects $c^* \in \argmax_{c \in \mathcal{C}} \sum_{i=1}^{\ell} \mu_i \Score_i(c, I)$ for each instance $I \in \mathcal{I}$. Here, $\Score_i$ represents different heuristic scoring functions that assess various aspects of a cut, such as "Efficacy" \cite{balas1996mixed} and "Parallelism" \cite{achterberg2007constraint}, for a specific instance $I$, and the coefficients $\bm{\mu}\in[0,1]^\ell$ are tunable weights for these scoring models. Since $\mathcal{C}$ is considered to be finite, the above optimization problem is solved through enumeration. The authors in \cite{turner2023adaptive} have experimentally implemented the idea of using a neural network to map instances to weights. We provide an upper bound on the pseudo-dimension in the following proposition of this learning problem. 

\begin{proposition}\label{prop:learnability-of-cut-selection-policy}
    Under the same conditions as Theorem \ref{thm:main_theorem_1} and \ref{thm:main_theorem_2}, let  $f_S(I,\bm{\mu})$ denote the size of the branch-and-bound tree for $I$ after adding a cutting plane determined by the weighted scoring model parameterized by $\bm{\mu} \in [0,1]^\ell$. The pseudo-dimension bounds are given by:
    \begin{align*}
        \Pdim \left( \mathcal{F}_{N^{\sgn}, \Sigmoid}^{f_S} \right) &= \mathcal{O}\left( W \log (U |\mathcal{C}|) \right),\\
        \Pdim \left( \mathcal{F}_{N^{\ReLU}, \CReLU}^{f_S} \right) &= \mathcal{O}\left( LW \log(U+\ell) + W \log(|\mathcal{C}|) \right).
    \end{align*}
\end{proposition}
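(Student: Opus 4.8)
The plan is to reduce Proposition~\ref{prop:learnability-of-cut-selection-policy} to \cref{cor:multi-classification} by understanding the piecewise structure of $f_S(I,\cdot)$ on the weight space $[0,1]^\ell$. For a fixed instance $I$, the map $\bm{\mu}\mapsto c^*(\bm{\mu}):=\argmax_{c\in\mathcal{C}}\sum_{i=1}^\ell \mu_i\Score_i(c,I)$ is piecewise constant: two competing cuts $c,c'\in\mathcal{C}$ swap rank exactly across the hyperplane $\sum_{i=1}^\ell \mu_i(\Score_i(c,I)-\Score_i(c',I))=0$. There are at most $\binom{|\mathcal{C}|}{2}=O(|\mathcal{C}|^2)$ such hyperplanes, and within each region of the resulting arrangement the selected cut $c^*$ is fixed, hence $f_S(I,\bm{\mu})$ is constant there (it equals the tree size $f_{\ROW}(I,c^*)$). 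So $f_S(I,\cdot)$ satisfies the hypothesis of \cref{thm:main_theorem_1,thm:main_theorem_2} with $\Gamma=O(|\mathcal{C}|^2)$ boundary polynomials, each of degree $\gamma=1$, and $S(I,\cdot)$ constant (degree $\lambda=0$) on each piece.

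First I would make precise the tie-breaking convention in the $\argmax$ (e.g.\ break ties by a fixed index order on $\mathcal{C}$) so that $c^*(\bm{\mu})$ is genuinely well-defined everywhere on $[0,1]^\ell$ and the regions of constancy are exactly the full-dimensional cells of the hyperplane arrangement (the lower-dimensional boundary faces can be absorbed into adjacent cells by the convention, or handled by noting that the sign-pattern description in \cref{thm:main_theorem_1} already accounts for the boundaries). Second, plug $\Gamma=O(|\mathcal{C}|^2)$, $\gamma=1$, $\lambda=0$ into the two bounds: \cref{thm:main_theorem_1} gives $O(W\log(U\cdot 1\cdot |\mathcal{C}|^2\cdot 1))=O(W\log(U|\mathcal{C}|))$, and \cref{thm:main_theorem_2} gives $O(LW\log(U+\ell)+W\log(1\cdot|\mathcal{C}|^2\cdot 1))=O(LW\log(U+\ell)+W\log|\mathcal{C}|)$, matching the claimed bounds. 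One subtlety worth a sentence: the parameter count here is $\ell$ (the number of score functions), which is also the output dimension of the neural network, so $\ell$ is exactly the ``$\ell$'' appearing in the architecture $\bm{w}=[d,w_1,\dots,w_L,\ell]$ and in the ReLU bound; no mismatch arises.

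I do not expect a genuine obstacle here --- the argument is a direct instantiation of \cref{thm:main_theorem_1,thm:main_theorem_2} once the piecewise-constant structure of the score-based selection rule is identified, which is itself standard (it mirrors the reasoning in~\cite{balcan2021sample} for why linear combinations of auxiliary scores induce a hyperplane arrangement on the weight space). The only point requiring a little care is confirming that $f_S(I,\cdot)$ is \emph{constant}, not merely low-degree, on each cell: this holds because once $c^*$ is pinned down, running branch-and-cut with that single cut added at the root produces a tree whose size does not depend on $\bm{\mu}$ at all. Hence $\lambda=0$, which is what yields the clean $\log|\mathcal{C}|$ (rather than $\log(|\mathcal{C}|\cdot(\lambda+1))$) term. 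If one preferred to avoid an explicit tie-breaking rule, an alternative is to observe that $f_S$ is a composition of the piecewise-constant map $\bm{\mu}\mapsto c^*$ with $f_{\ROW}$, and invoke \cref{prop:learnability-of-finite-cuts} together with the fact that pre-composing with a fixed piecewise-linear selection does not increase pseudo-dimension beyond the stated bounds; but the direct route through \cref{thm:main_theorem_1,thm:main_theorem_2} is cleaner.
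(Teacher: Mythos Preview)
Your proposal is correct and follows essentially the same route as the paper: identify the $\binom{|\mathcal{C}|}{2}$ pairwise-comparison hyperplanes in $\bm{\mu}$-space, observe that $f_S(I,\cdot)$ is constant on each cell, and plug $\Gamma=O(|\mathcal{C}|^2)$, $\gamma=1$, $\lambda=0$ into \cref{thm:main_theorem_1,thm:main_theorem_2}. (Your opening sentence says the plan is to reduce to \cref{cor:multi-classification}, but what you actually carry out --- and what the paper does --- is the direct application of the two main theorems; the extra remarks on tie-breaking are a nice touch the paper omits.)
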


\section{Numerical experiments}\label{section:experiments}
In this section, for a given ILP instance space $\I$ conforming to the description in Section 3.2, and a fixed distribution $\D$ over it, we primarily attempt to employ ReLU neural networks for choosing a CG cut multiplier for each instance in the distribution, which translates into addressing the following neural network training (empirical risk minimization) problem:
\begin{equation}\label{eq:experiment-goal}
    \min_{\w \in \R^W} \frac{1}{t} \sum_{i=1}^t f_{\CG} \left(I_i, \varphi_{\w}^{{N^{\ReLU},\CReLU}}(I_i)\right),
\end{equation}
where $I_1, \dots, I_t \in \I$ are i.i.d. samples drawn from $\D$, and recall that $\varphi_{\w}^{{N^{\ReLU},\CReLU}}(\cdot) = \CReLU( N^{\sigma}(\Enc(\cdot), \w) )$.

This problem, concerning the selection from an infinite pool of cuts, presents a significant challenge. The target function $f_{\CG}$, in relation to its parameters, is an intricately complex, high-dimensional, and piecewise constant function with numerous pieces (recall \cref{lemma:bound-on-the-number-of-hyperplanes}), making direct application of classical gradient-based methods seemingly impractical. As such, we use an RL approach, with our goal switched to identify a relatively ``good'' neural network for this distribution. 
Then, based on \cref{thm:learnability-of-1-chavatal-cut} and \cref{thm:sample-comp} the average performance of the chosen parameter setting on sampled instances closely approximates the expected performance on the distribution in high probability, given that the sample size $t$ is sufficiently large.

\subsection{Experimental setup}\label{section:experiment-setup}

\paragraph{Data.} 
We consider the multiple knapsack problems \cite{kellerer2004multidimensional} with 16 items and 2 knapsacks, using the Chvátal distribution as utilized in \cite{balcan2021improved}, originally proposed by Chvátal in \cite{chvatal1980hard}. Our synthetic dataset has a training set of 5,000 instances and a test set of 1,000 instances from the same distribution.

\paragraph{Training.} 
Each instance $I$ sampled from the distribution $\D$ is treated as a state in RL, with the outputs of the neural network considered as actions in the RL framework. The neural network thus functions as the actor in an actor-critic scheme \cite{silver2014deterministic}, where the reward is defined as the percentage reduction in the tree size after adding a cut, i.e., $\frac{f_{\CG}(I,\0) - f_{\CG}(I,\u)}{f_{\CG}(I,\0)}$. The Twin Delayed Deep Deterministic Policy Gradient (TD3) algorithm \cite{fujimoto2018addressing} is used here for the training of the neural network.

The experiments were conducted on a Linux machine with a 12-core Intel i7-12700F CPU, 32GB of RAM, and an NVIDIA RTX 3070 GPU with 8GB of VRAM. We used Gurobi 11.0.1 \cite{gurobi} to solve the ILPs, with default cuts, heuristics, and presolve settings turned off. The neural networks were implemented using PyTorch 2.3.0. The details of the implementation are available at \url{https://github.com/Hongyu-Cheng/LearnCGusingNN}.

\subsection{Empirical results}\label{section:experiment-results}

\paragraph{Better cuts.} The experimental results indicate that even a suboptimal neural network parameterization can outperform the cut selection methodologies used in leading solvers like SCIP. \cref{fig:mu_treesize_with_comparisons} presents the average tree size comparison across 1,000 novel test instances, with three distinct strategies: 
\begin{enumerate}
    \item The blue solid line represents the tree size when the cut is selected based on the highest convex combination score of cut efficacy and parallelism, adjusted by a parameter $\mu \in [0, 1]$ (incremented in steps of 0.01). The candidate set of cuts includes all CG and GMI cuts generated from the appropriate rows of the optimal simplex tableau. 
    \item The green dash-dotted line demonstrates a notable reduction in tree size when using cuts generated through our RL approach; 
    \item The purple dash-dotted line follows the same approach as 2., but uses linear threshold neural networks to generate CG cut parameters. The training process uses the idea of Straight-Through Estimator (STE) \cite{bengio2013estimating,yin2019understanding}. 
\end{enumerate}
\begin{figure}[htbp]
    \centering
    \includegraphics[width=0.5\linewidth]{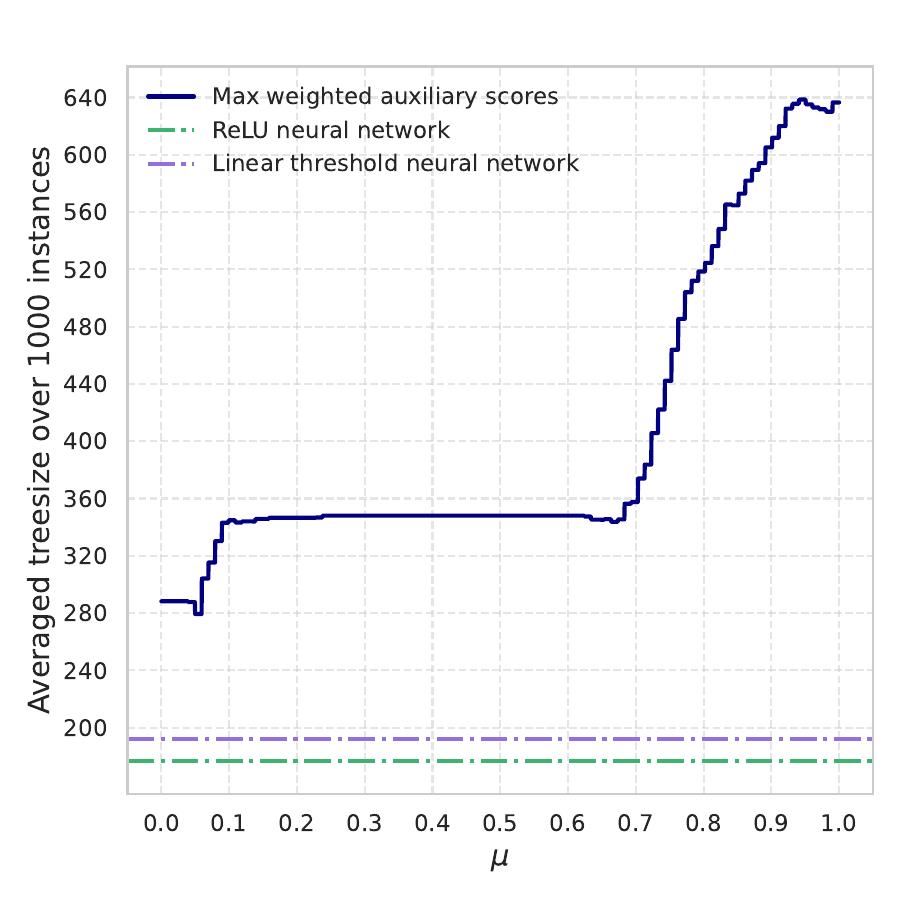}
    \caption{Comparison of branch-and-bound tree sizes using different cut selection strategies.}
    \label{fig:mu_treesize_with_comparisons}
\end{figure}

\paragraph{Faster selection of cuts.} The cut selection using neural networks only requires matrix multiplications and ReLU activations, and is notably rapid and lends itself to efficient parallelization on GPUs. In contrast, the procedure for selecting CG cuts from a simplex tableau, which involves solving relaxed LP problems, is considerably slower, even without taking into account the time to score and compare potential cuts. Our empirical studies, conducted on a test set of 1,000 instances repeated 100 times, highlight the significant disparity in computational speed, as shown in \cref{tab:computational-speed}.

\begin{table}[htbp]
    \caption{Comparison of the computational speeds between ReLU neural network inference and LP solving. This table presents the total time in seconds for 100 runs on a test set of 1,000 instances.}
    \label{tab:computational-speed}
    \vskip 0.15in
    \begin{center}
    \begin{small}

    \begin{tabular}{lc}
    \toprule
    \begin{sc}Tasks\end{sc} & \begin{sc}Time\end{sc}\\
    \midrule
    Computing cuts via trained neural network on GPU & 0.010 \\
    Computing cuts via trained neural network on CPU & 0.055 \\
    Solving required LPs using Gurobi & 2.031 \\
    \bottomrule
    \end{tabular}
    \end{small}
    \end{center}
    \vskip -0.1in
\end{table}

\section{Discussions and open questions}\label{sec:discussion}
In our study, we concentrated on adding CG cuts solely at the root of the branch-and-cut tree. However, devising a strategy that generates high quality cutting planes while being efficient across the entire branch-and-cut tree poses a significant and intriguing challenge for future research. Further, our theoretical findings are applicable to any encoder that maps instances into Euclidean spaces. Hence, utilizing a fixed encoder capable of converting ILPs with different number of constraints into a same Euclidean space can in principle enable the training of a unified neural network to generate cutting planes across the branch-and-cut tree. Moreover, an effective encoder could improve the neural network's performance beyond the one achieved with the basic stacking encoder used in our paper.

The neural network training problem~\eqref{eq:experiment-goal} also requires further study. We used an RL approach (see \cref{section:experiments}) to update the neural network parameters and minimize the average tree size for ILP instances sampled from a given distribution. However, this method does not guarantee convergence to optimal parameters, and relies heavily on the exploratory nature of the RL algorithm. For ILP distributions where random exploration of Chvátal multipliers is unlikely to yield smaller tree sizes, the RL algorithm may struggle to identify an effective parameter setting. Developing a more efficient and robust training methodology would greatly improve the practical value of our work.

\begin{ack}
    All four authors gratefully acknowledge support from Air Force Office of Scientific Research (AFOSR) grant FA95502010341. Hongyu Cheng, Barara Fiedorowicz and Amitabh Basu also gratefully acknowledge support from National Science
Foundation (NSF) grant CCF2006587.
\end{ack}

\bibliography{reference}
\bibliographystyle{plainnat}

\newpage
\appendix

\section{Auxiliary lemmas}\label{section:auxiliary-lemmas}
\begin{lemma}\label{lem:useful-inequalities}
    For any $x_1,\dots,x_n,\lambda_1,\dots,\lambda_n > 0$, the following inequalities hold:
    \begin{align}
        \log x_1 &\leq \frac{x_1}{\lambda_1} + \log \left( \frac{\lambda_1}{e} \right),\label{eq:log-inequality}\\
        x_1^{\lambda_1} \cdots x_n^{\lambda_n} &\leq \left( \frac{\lambda_1 x_1 + \cdots + \lambda_n x_n}{\lambda_1 + \cdots + \lambda_n} \right)^{\lambda_1+\cdots +\lambda_n}.\label{eq:Young-inequality} 
    \end{align}
\end{lemma}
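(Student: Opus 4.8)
\textbf{Proof proposal for \cref{lem:useful-inequalities}.}

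The plan is to derive both inequalities from elementary convexity/concavity facts, treating \eqref{eq:log-inequality} first and then bootstrapping it (or, more directly, the concavity of $\log$) to obtain the weighted AM--GM statement \eqref{eq:Young-inequality}. For \eqref{eq:log-inequality}, I would start from the single-variable inequality $\log y \le y - 1$ for all $y > 0$, which is the first-order tangent bound for the concave function $\log$ at $y = 1$. Substituting $y = x_1/\lambda_1$ gives $\log(x_1/\lambda_1) \le x_1/\lambda_1 - 1$, i.e.\ $\log x_1 \le x_1/\lambda_1 + \log\lambda_1 - 1 = x_1/\lambda_1 + \log(\lambda_1/e)$, which is exactly the claim. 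This is a two-line argument once the tangent bound $\log y \le y-1$ is in hand, and that bound itself follows from concavity of $\log$ or a trivial derivative check.

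For \eqref{eq:Young-inequality}, I would use concavity of $\log$ directly via Jensen's inequality. Write $\Lambda := \lambda_1 + \cdots + \lambda_n$ and set the convex weights $p_i := \lambda_i/\Lambda$, so $\sum_i p_i = 1$. Applying Jensen to the concave function $\log$ at the points $x_1,\dots,x_n$ with weights $p_i$ yields
\[
    \sum_{i=1}^n p_i \log x_i \;\le\; \log\!\left( \sum_{i=1}^n p_i x_i \right).
\]
Multiplying through by $\Lambda$ gives $\sum_i \lambda_i \log x_i \le \Lambda \log\!\big( \sum_i \lambda_i x_i / \Lambda \big)$, and exponentiating both sides produces precisely $x_1^{\lambda_1}\cdots x_n^{\lambda_n} \le \big( (\lambda_1 x_1 + \cdots + \lambda_n x_n)/(\lambda_1+\cdots+\lambda_n) \big)^{\lambda_1+\cdots+\lambda_n}$. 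Alternatively, one can avoid invoking Jensen by name and instead apply \eqref{eq:log-inequality} $n$ times with a common choice of $\lambda$'s and sum, but the Jensen route is cleaner.

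There is no genuine obstacle here: both statements are classical (the first is a rescaled tangent bound, the second is weighted AM--GM). The only thing to be careful about is the bookkeeping of which quantities play the role of the ``weights'' versus the ``points,'' and ensuring positivity of all arguments so that $\log$ and the fractional powers are well-defined --- both of which are guaranteed by the hypothesis $x_i, \lambda_i > 0$. I would present the tangent bound $\log y \le y - 1$ as the single shared ingredient, deduce \eqref{eq:log-inequality} immediately, and then either cite concavity of $\log$ / Jensen for \eqref{eq:Young-inequality} or sum the instances of \eqref{eq:log-inequality}.
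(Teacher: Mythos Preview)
Your proposal is correct: the substitution $y = x_1/\lambda_1$ in the tangent bound $\log y \le y-1$ yields \eqref{eq:log-inequality} immediately, and Jensen applied to the concave logarithm with weights $\lambda_i/\sum_j\lambda_j$ gives \eqref{eq:Young-inequality} after exponentiating. The paper itself states \cref{lem:useful-inequalities} without proof, treating both inequalities as standard, so there is no argument to compare against; your derivation is exactly the sort of short verification one would supply if a proof were required.
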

\begin{lemma}[Theorem 5.5 in~\cite{matousek1999geometric}, Lemma 17 in~\cite{bartlett2019nearly}, Lemma 3.3 in~\cite{anthony1999neural}, Theorem 1.3 in~\cite{edelsbrunner1987algorithms}]\label{lem:poly-decomp}
    Let $\P \subseteq \R^\ell$ and let $f_1, \ldots, f_t : \R^\ell \to \R$ with $t \geq \ell$ be functions that are polynomials of degree $m$ when restricted to $\P$. Then 
    \begin{align*}
    |\{\left(\sgn(f_1(\p)), \ldots, \sgn(f_t(\p))\right): \p \in \P\}| &= 1, &m = 0, \\
    |\{\left(\sgn(f_1(\p)), \ldots, \sgn(f_t(\p))\right): \p \in \P\}| &\leq \left(\frac{et}{\ell+1}\right)^{\ell+1}, &m = 1,\\
    |\{\left(\sgn(f_1(\p)), \ldots, \sgn(f_t(\p))\right): \p \in \P\}| &\leq 2\left(\frac{2etm}{\ell}\right)^\ell, &m\geq 2.
    \end{align*}
\end{lemma}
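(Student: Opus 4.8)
The plan is to handle the three cases separately, since each rests on a different classical estimate; the statement is a standard fact from combinatorial and real algebraic geometry, so the task is to assemble the right bounds rather than build new machinery. The case $m=0$ is immediate: each $f_i$ is constant on $\P$, so the vector $(\sgn(f_1(\p)),\ldots,\sgn(f_t(\p)))$ is independent of $\p$ and exactly one pattern is realized.

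For the affine case $m=1$, I would read the count through the combinatorics of a hyperplane arrangement together with the Sauer--Shelah numerical estimate. Each zero set $\{\p \in \P : f_i(\p)=0\}$ is a hyperplane, and on every connected component of $\P \setminus \bigcup_i \{f_i=0\}$ the sign vector is constant; the number of such components is the classical $\sum_{i=0}^{\ell}\binom{t}{i}$, and accounting for the $\{0,1\}$ convention (where the boundary is folded into the positive side) costs at most one extra dimension, giving $\sum_{i=0}^{\ell+1}\binom{t}{i}$. This is natural because affine functions on $\R^\ell$ form an $(\ell+1)$-dimensional space, so the relevant shattering dimension is $\ell+1$; writing $f_i(\p)=\langle (a_i,b_i),(\p,1)\rangle$ makes the reduction to homogeneous halfspaces in $\R^{\ell+1}$ transparent. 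The numerical inequality $\sum_{i=0}^{d}\binom{t}{i}\le (et/d)^d$ for $t\geq d\geq 1$ then converts this into the stated $\left(\frac{et}{\ell+1}\right)^{\ell+1}$.

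For the general case $m\geq 2$ there is no exact cell-counting formula, and the bound comes from Warren-type estimates built on the Milnor--Thom bound for the number of connected components of a real algebraic variety. The key observation is that within a single connected component of $\R^\ell \setminus \bigcup_i\{f_i=0\}$ no $f_i$ changes sign, so every realized strict sign vector in $\{-1,+1\}^t$ is constant on such a component and distinct vectors lie in distinct components; hence the number of patterns is at most the number of components. A standard perturbation argument—replacing $\bigcup_i\{f_i=0\}$ by a smooth hypersurface of degree $O(mt)$ and applying the Milnor--Thom/Oleinik--Petrovsky component bound—yields $2\left(\frac{2etm}{\ell}\right)^\ell$, where the leading factor $2$ absorbs the boundary (zero) patterns, handled either by restricting to the subvariety on which some $f_i$ vanishes and inducting on $\ell$, or by folding them directly into the overall constant.

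The main obstacle is precisely the $m\geq 2$ case: converting the Milnor--Thom component bound into a sign-pattern bound requires care in controlling the boundary patterns without losing more than a factor of two, and in verifying that the numerical estimate $\sum_{i=0}^{\ell}\binom{t}{i}\le (et/\ell)^\ell$ survives multiplication by the degree factor $2m$ to produce the exact constant $2\left(\frac{2etm}{\ell}\right)^\ell$. Since the statement coincides with the results in the four cited references, I would ultimately confirm that their hypotheses (namely $t\geq \ell$ and degree at most $m$ on the restriction to $\P$) match ours and quote the sharpest available constant, rather than re-deriving the real algebraic geometry bound from first principles.
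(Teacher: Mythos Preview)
The paper does not prove this lemma; it is stated as an auxiliary result with citations to four external sources and no proof is given in the text. Your proposal is therefore not comparable to a ``paper's own proof'' because there is none, but your sketch correctly identifies the standard ingredients behind each of the three bounds (triviality for $m=0$, hyperplane-arrangement cell counting plus the Sauer--Shelah estimate for $m=1$, and Warren/Milnor--Thom component bounds for $m\geq 2$), which is exactly the content of the cited references.
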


\begin{lemma}\label{lem:piecewise-poly-pdim}
    Let $h: \I \times \P \to \R$ define a parameterized function class with $\P \subseteq \R^\ell$, and let $\H$ be the corresponding hypothesis class (\cref{def:linear-function-class}). Let $m\in \mathbb{N}$ and $R:\mathbb{N} \to \mathbb{N}$ be a function with the following property: for any $t \in \N$ and $I_1, \ldots, I_t \in \I$, there exist $R(t)$ subsets $\P_1, \ldots, \P_{R(t)}$ of $\P$ such that $\P = \cup_{i=1}^{R(t)} \P_i$ and, for all $i \in [R(t)]$ and $j \in [t]$, $h(I_j, \p)$ restricted to $\P_i$ is a polynomial function of degree at most $m$ depending on at most $\ell' \leq \ell$ of the coordinates. In other words, the map $$\p \mapsto (h(I_1, \p), \ldots, h(I_t, \p))$$ is a piecewise polynomial map from $\P$ to $\R^t$ with at most $R(t)$ pieces. Then, $$\Pdim(\H) \leq \sup\left\{t \geq 1: 2^{t-1} \leq R(t)\left(\frac{2et(m+1)}{\ell'}\right)^{\ell'}\right\}$$
\end{lemma}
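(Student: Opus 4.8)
The plan is to bound, for any fixed finite set of instances $I_1,\dots,I_t\in\I$ and any fixed thresholds $s_1,\dots,s_t\in\R$, the number of distinct sign vectors $\bigl(\sgn(h(I_1,\p)-s_1),\dots,\sgn(h(I_t,\p)-s_t)\bigr)$ that can arise as $\p$ ranges over $\P$, and then to observe that pseudo-shattering $\{I_1,\dots,I_t\}$ forces this number to equal $2^t$. By hypothesis, $\P$ is covered by $R(t)$ pieces $\P_1,\dots,\P_{R(t)}$ on each of which $\p\mapsto(h(I_1,\p),\dots,h(I_t,\p))$ agrees with a polynomial map of degree at most $m$ in at most $\ell'$ of the coordinates; consequently each shifted function $\p\mapsto h(I_j,\p)-s_j$ restricts on $\P_i$ to a polynomial of degree at most $m$ in those same coordinates. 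So it suffices to bound, piece by piece, the number of sign patterns of $t$ such polynomials, and sum over the $R(t)$ pieces.

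For the per-piece bound I would invoke \cref{lem:poly-decomp} inside the (at most $\ell'$-dimensional) coordinate subspace carrying the active variables. To get a single clean expression I would merge its three cases: when $m=0$ the restricted functions are constant, so exactly one sign pattern occurs on $\P_i$; when $m=1$ I would regard the functions as polynomials of degree at most $2$ and apply the $m\ge 2$ case of \cref{lem:poly-decomp} with degree $2$; and when $m\ge 2$ I would apply that case directly. In every case the count on a single piece is at most $2\bigl(2et(m+1)/\ell'\bigr)^{\ell'}$ — the factor $m+1\ge 2$ accounting for the degree bump in the $m=1$ case, the inequality $2etm/\ell'\le 2et(m+1)/\ell'$ handling $m\ge 2$, and $t\ge\ell'$ ensuring $2\bigl(2et/\ell'\bigr)^{\ell'}\ge 1$ so that the $m=0$ case is subsumed as well. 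Summing over the pieces, the total number of realizable sign vectors over all of $\P$ is at most $R(t)\cdot 2\bigl(2et(m+1)/\ell'\bigr)^{\ell'}$.

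Finally, if $\{I_1,\dots,I_t\}$ is pseudo-shattered then all $2^t$ sign vectors are realized, so $2^t\le R(t)\cdot 2\bigl(2et(m+1)/\ell'\bigr)^{\ell'}$, that is, $2^{t-1}\le R(t)\bigl(2et(m+1)/\ell'\bigr)^{\ell'}$. Hence every $t$ for which some set of size $t$ is pseudo-shattered lies in the set over which the supremum is taken, and taking the supremum over all pseudo-shattered sets yields the claimed upper bound on $\Pdim(\H)$.

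I expect the only real work to be bookkeeping rather than any conceptual hurdle: (i) passing to the subspace of the $\le\ell'$ active coordinates so that the ambient dimension in \cref{lem:poly-decomp} is $\ell'$ and not $\ell$; (ii) respecting the hypothesis $t\ge\ell'$ required by \cref{lem:poly-decomp} and disposing of the (uninteresting) small-$t$ regime separately, which does not affect the bound in the applications; and (iii) verifying that the single constant $2\bigl(2et(m+1)/\ell'\bigr)^{\ell'}$ genuinely dominates each of the three cases of \cref{lem:poly-decomp}. None of these is difficult, but they are where care is needed.
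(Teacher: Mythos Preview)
Your proposal is correct and follows essentially the same route as the paper's own proof: bound the number of sign patterns piece by piece via \cref{lem:poly-decomp}, sum over the $R(t)$ pieces, and compare against $2^t$. In fact you are more explicit than the paper about the case-merging in \cref{lem:poly-decomp} (the paper simply writes down the unified bound $2\bigl(2et(m+1)/\ell'\bigr)^{\ell'}$ without justifying how it subsumes the $m=0$ and $m=1$ cases) and about the passage to the $\ell'$-dimensional active subspace; these are exactly the bookkeeping points you flagged, and the paper glosses over them.
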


\begin{proof}
    Given any $t \in \N$ and $(I_1, s_i), \ldots, (I_t, s_t) \in \I \times \R$, we first bound the size of $$\left\{ \left( \sgn(h(I_1,\p)-s_1),\dots, \sgn(h(I_t, \p)-s_t) \right) : \p \in \P \right\}.$$ Within each $\P_i$, $h(I_j, \p) - s_j$ is a polynomial in $p$ for every $j = 1, \ldots, t$, given the hypothesis. Applying Lemma~\ref{lem:poly-decomp}, $$|\{\left(\sgn(h(I_1,\p) - s_1), \ldots, \sgn(h(I_t,\p)-s_t)\right): \p \in \P_i\}| \leq 2\left(\frac{2et(m+1)}{\ell'}\right)^{\ell'}.$$ Summing over the different $\P_i$, $i \in [R(t)]$, we obtain that $$\left|\left\{ \left( \sgn(h(I_1,\p)-s_1),\dots, \sgn(h(I_t, \p)-s_t) \right) : \p \in \P \right\}\right| \leq 2R(t)\left(\frac{2et(m+1)}{\ell'}\right)^{\ell'}.$$ Thus, $\Pdim(\H)$ is bounded by the largest $t$ such that $2^t \leq 2R(t)\left(\frac{2et(m+1)}{\ell'}\right)^{\ell'}$.
\end{proof}

\begin{lemma}\label{lem:ReLU-NN-regions}
    Let $N^{\ReLU}:\R^d \times \R^W \to \R^\ell$ be a neural network function with $\ReLU$ activation and architecture $ \bm{w} = [d, w_1, \ldots, w_L, \ell]$ (Definition~\ref{def:DNN}). Then for every natural number $t>LW$, and any $\x^1,\ldots, \x^t \in \R^d$, there exists subsets $\W_1, \ldots, \W_Q$ of $\R^W$ with 
    $Q \leq 2^L\left(\frac{2et\sum_{i=1}^L(iw_i)}{LW}\right)^{LW}$  whose union is all of $\R^W$, such that $N(\x^j, \w)$ restricted to $\w \in \W_i$ is a polynomial function of degree at most $L+1$ for all $(i,j)\in [Q] \times [t]$.
\end{lemma}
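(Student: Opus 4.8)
The idea is to track, for a fixed input $\x^j$, how the output $N^{\ReLU}(\x^j, \cdot)$ depends on $\w$ as we move layer by layer through the network, while simultaneously counting the number of polynomial ``pieces'' that arise from the $\ReLU$ activations flipping sign. The key observation is that for a fixed input, the pre-activation value of each neuron, as a function of $\w$, is a \emph{polynomial} in $\w$ of controlled degree, provided we have already fixed the activation pattern of all earlier layers. Concretely, I would argue inductively: after conditioning on the activation pattern (which neurons are ``on'') in hidden layers $1, \ldots, k-1$, the output of layer $k-1$ is a polynomial in $\w$ of degree at most $k-1$ in each coordinate region, since each layer composes an affine map (whose coefficients are entries of $\w$, contributing degree $1$) with a fixed linear selection (the $\ReLU$ mask, contributing degree $0$). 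Hence the pre-activation of a neuron in layer $k$ is a polynomial of degree at most $k$. So the total degree reached at the output layer $T_{L+1}$ is at most $L+1$, which gives the claimed degree bound.

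Next I would count regions. For the $t$ fixed inputs $\x^1, \ldots, \x^t$ and for each of the $w_k$ neurons in layer $k$, the sign of that neuron's pre-activation is the sign of a polynomial in $\w$ of degree at most $k$ (restricted to the region already determined by earlier layers). So processing layer $k$ introduces $t \cdot w_k$ new polynomial sign conditions, each of degree $\leq k$, over a parameter space of dimension $W$. The plan is to apply \cref{lem:poly-decomp} (with ambient dimension $\ell \leftarrow W$, number of functions $\leftarrow t w_k$, degree $\leftarrow k$): the number of sign patterns of these $t w_k$ polynomials of degree $k$ in $W$ variables is at most $2\left(\frac{2 e t w_k k}{W}\right)^{W}$. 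Multiplying these bounds across layers $k = 1, \ldots, L$ and using the weighted AM--GM inequality (\cref{lem:useful-inequalities}, specifically~\eqref{eq:Young-inequality}) to collapse the product $\prod_k \left(\frac{2etw_k k}{W}\right)^W$ into a single expression of the form $\left(\frac{2et\sum_i (i w_i)}{LW}\right)^{LW}$, together with $\prod_k 2 = 2^L$, yields the stated bound $Q \leq 2^L\left(\frac{2et\sum_{i=1}^L (i w_i)}{LW}\right)^{LW}$.

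There is one subtlety to handle carefully: when applying \cref{lem:poly-decomp} to layer $k$, the polynomials in question are only polynomials \emph{within} each of the regions carved out by layers $1, \ldots, k-1$, not globally on $\R^W$. So the counting is really a nested refinement: within each region from the first $k-1$ layers, the layer-$k$ sign conditions subdivide it into at most $2\left(\frac{2etw_k k}{W}\right)^W$ subregions, and this multiplies. The decoupling of the degree growth (degree $\leq k$ at layer $k$) from the region count at each layer is what makes the argument go through; I would state it as a clean induction on the layer index, carrying the invariant ``within each current region, every neuron pre-activation in layers $\leq k$ is a polynomial of degree $\leq$ its layer index.'' The main obstacle — and the only place requiring real care — is getting the product-of-bounds manipulation to land exactly on the claimed form; this is where~\eqref{eq:Young-inequality} applied with exponents $\lambda_i = W$ for each of the $L$ layers (so $\sum \lambda_i = LW$) and values $x_i = \frac{2etw_i i}{W}$ is needed, and one should double-check the arithmetic so that the $W$'s and $L$'s cancel correctly to produce $\frac{2et\sum_i(iw_i)}{LW}$ inside the parentheses and $LW$ as the exponent. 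Everything else is bookkeeping.
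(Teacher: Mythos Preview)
Your proposal is correct and reconstructs exactly the layer-by-layer argument from Bartlett et al.\ (Section~4 of \cite{bartlett2019nearly}), which is all the paper's own proof does---it simply cites that reference. One small technical point: at layer $k=1$ the pre-activations are degree-$1$ polynomials, and \cref{lem:poly-decomp} as stated in the paper gives the bound $\bigl(\tfrac{et}{\ell+1}\bigr)^{\ell+1}$ for that case rather than $2\bigl(\tfrac{2et}{\ell}\bigr)^\ell$; however, the latter form is also valid for degree $\geq 1$ (it is precisely the statement of Lemma~17 in \cite{bartlett2019nearly}), so your product-and-AM--GM computation with $x_k = \tfrac{2etw_k k}{W}$ and $\lambda_k = W$ goes through exactly as you describe.
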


\begin{proof}
    Follows from Section 2 in \cite{bartlett1998almost} and Section 4 in \cite{bartlett2019nearly}. 
\end{proof}

\begin{lemma}[\cite{anthony1999neural,sontag1998vc}]\label{lem:LT-NN-regions}
    Let $N^{\sgn}:\R^{w_0} \times \R^W \to \R^\ell$ be a neural network function with $\sgn$ activation and architecture $ \bm{w} = [w_0, w_1, \ldots, w_L, \ell]$, where the final linear transformation $T_{L+1}$ is taken to be the identity (Definition~\ref{def:DNN}). Let $U = w_1 + \ldots + w_L$ denote the size of the neural network.
    Then for every $t\in \N$, and any $\x^1,\ldots, \x^t \in \R^{w_0}$, there exists subsets $\W_1, \ldots, \W_Q$ of $\R^W$ with $Q \leq \left(\frac{etU}{W'}\right)^{W'}$ whose union is all of $\R^W$, where $W' = \sum_{i=1}^L (w_{i-1}+1)w_i$, such that $N^{\sgn}(\x^j, \w)$ restricted to any $\W_i$ is constant for all $(i,j)\in [Q]\times[t]$.
\end{lemma}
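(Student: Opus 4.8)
\textbf{Proof plan for Lemma~\ref{lem:LT-NN-regions}.} The goal is to bound the number of ``activation regions'' of a linear-threshold network, i.e., subsets of the weight space $\R^W$ on which the vector-valued map $\w \mapsto (N^{\sgn}(\x^1,\w), \ldots, N^{\sgn}(\x^t,\w))$ is constant. Since $T_{L+1}$ is the identity, the output is entirely determined by the $\pm$ (really $\{0,1\}$) signs produced by the neurons in layers $1, \ldots, L$, so it suffices to count the number of distinct sign-pattern vectors achievable across all $tU$ neuron-evaluations (neuron $j$ in layer $i$, evaluated on input $\x^k$) as $\w$ ranges over $\R^W$. The plan is to process the layers one at a time, in order $i = 1, 2, \ldots, L$, maintaining a partition of $\R^W$ into cells on which the outputs of \emph{all} neurons in layers $1$ through $i$, evaluated at all $t$ inputs, are constant.

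\textbf{Key steps.} First I would fix an arbitrary cell $\Omega$ from the partition obtained after processing layers $1, \ldots, i-1$; on $\Omega$, the vector of inputs fed into layer $i$ for each data point $\x^k$ is a fixed point in $\R^{w_{i-1}}$ (it does not depend on $\w$ restricted to $\Omega$), say $\z^k(\Omega) \in \R^{w_{i-1}}$ for $k \in [t]$. The preactivation of neuron $j$ in layer $i$ at input $\x^k$ is then an affine function of the weights and bias of that neuron — a point in a parameter space of dimension $(w_{i-1}+1)$ — because it equals $\langle (\text{row } j \text{ of } A^i), \z^k(\Omega)\rangle + b^i_j$. Holding $\Omega$ fixed, across all $w_i$ neurons and all $t$ inputs we get $tw_i$ affine functions, but crucially the functions for a given neuron $j$ involve only that neuron's $(w_{i-1}+1)$ parameters; so by the standard hyperplane-arrangement bound (Lemma~\ref{lem:poly-decomp} with $m=1$, or directly the bound that $r$ hyperplanes in $\R^p$ induce at most $(er/p)^p$ sign patterns when $r \ge p$), the number of sign patterns of all $w_i$ neurons on all $t$ inputs, as the layer-$i$ weights vary, is at most $\left(\frac{et \cdot w_{i-1}w_i + \ldots}{(w_{i-1}+1)w_i}\right)^{(w_{i-1}+1)w_i}$; more precisely one partitions the layer-$i$ parameter block neuron-by-neuron and the count factorizes, giving at most $\left(\frac{etw_i}{w_{i-1}+1}\right)^{(w_{i-1}+1)w_i}$ new sub-cells of $\Omega$. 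Multiplying over $i = 1, \ldots, L$, using that the layer-$i$ parameter block has dimension $(w_{i-1}+1)w_i$ and $W' = \sum_{i=1}^L (w_{i-1}+1)w_i$, and then bounding the product $\prod_i (\text{base}_i)^{p_i}$ by a single term $\left(\frac{\sum_i p_i \cdot \text{base}_i}{\sum_i p_i}\right)^{\sum_i p_i}$ via weighted AM-GM (inequality~\eqref{eq:Young-inequality} in Lemma~\ref{lem:useful-inequalities}) and crudely estimating $\sum_i (w_{i-1}+1)w_i \cdot \frac{tw_i}{w_{i-1}+1} \le tU \cdot (\text{something} \le U)$ — actually $\sum_i w_i^2 t \le U\sum_i w_i = U\cdot U$, hmm — would yield a bound of the claimed form $(etU/W')^{W'}$ after the routine simplification. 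On each of the $Q$ final cells, every neuron output in layers $1,\ldots,L$ is constant at every $\x^k$, hence so is $N^{\sgn}(\x^k,\cdot)$, which is exactly the conclusion.

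\textbf{Main obstacle.} The conceptual content — layer-by-layer region counting with affinity of preactivations in each layer's own parameters once the earlier layers are frozen — is classical (this is essentially the Sontag / Anthony--Bartlett VC-dimension argument, hence the citation), so the real work is purely bookkeeping: tracking how the per-layer hyperplane-arrangement bounds multiply, correctly identifying the relevant dimension $(w_{i-1}+1)w_i$ of each layer's parameter block versus the ambient $W$, and then massaging $\prod_{i=1}^L \left(\frac{etw_i}{w_{i-1}+1}\right)^{(w_{i-1}+1)w_i}$ into the clean closed form $\left(\frac{etU}{W'}\right)^{W'}$ without losing too much. I expect the trickiest inequality to be that final consolidation — one must be careful that the AM-GM step and the bound $\sum_i w_{i-1} w_i$-type quantities against $U^2$ (or $tU$) are applied in a way that actually produces $U$ and not a larger polynomial in the widths — but since the lemma is quoted from \cite{anthony1999neural,sontag1998vc} rather than proved from scratch, it would be legitimate here to simply cite those sources and sketch the layer-by-layer reduction, as the excerpt's companion Lemma~\ref{lem:ReLU-NN-regions} does (``Follows from \ldots'').
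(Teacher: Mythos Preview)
Your overall approach is exactly the paper's: process the layers in order, use the hyperplane-arrangement bound (Lemma~\ref{lem:poly-decomp} with $m=1$) within each layer once the previous layers' outputs are frozen, multiply the per-layer counts, and finish with the weighted AM--GM inequality~\eqref{eq:Young-inequality}. The only problem is a bookkeeping slip in the per-layer count, and that slip is precisely what is causing your ``hmm'' at the consolidation step.

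For a single neuron in layer $i$, the preactivation at each of the $t$ inputs is affine in that neuron's own $(w_{i-1}+1)$ parameters, so you have $t$ hyperplanes in $\R^{w_{i-1}+1}$ and hence at most $\bigl(\tfrac{et}{w_{i-1}+1}\bigr)^{w_{i-1}+1}$ sign patterns. The $w_i$ neurons have disjoint parameter blocks, so the counts multiply to $\bigl(\tfrac{et}{w_{i-1}+1}\bigr)^{(w_{i-1}+1)w_i}$ --- with \emph{no} $w_i$ in the base. (Equivalently: treat all $tw_i$ preactivations as hyperplanes in the full layer-$i$ block $\R^{(w_{i-1}+1)w_i}$; then $(er/p)^p = \bigl(\tfrac{e\cdot tw_i}{(w_{i-1}+1)w_i}\bigr)^{(w_{i-1}+1)w_i}$, and the $w_i$'s cancel.) Your expression $\bigl(\tfrac{etw_i}{w_{i-1}+1}\bigr)^{(w_{i-1}+1)w_i}$ carries a stray $w_i$ in the numerator, which is exactly why you end up fighting $\sum_i w_i^2$ downstream.

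With the corrected base $x_i = \tfrac{et}{w_{i-1}+1}$ and exponent $\lambda_i = (w_{i-1}+1)w_i$, the AM--GM step~\eqref{eq:Young-inequality} lands on the nose:
\[
\sum_{i=1}^L \lambda_i x_i \;=\; \sum_{i=1}^L (w_{i-1}+1)w_i \cdot \frac{et}{w_{i-1}+1} \;=\; et\sum_{i=1}^L w_i \;=\; etU,
\]
so $\prod_i x_i^{\lambda_i} \le \bigl(\tfrac{etU}{W'}\bigr)^{W'}$ with no further estimation needed. That is the paper's computation verbatim; there is no $U^2$ anywhere.
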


\begin{proof}
    Fix a $t\in \N$ and $\x^1, \ldots, \x^t \in \R^{w_0}$. Consider a neuron in the first hidden layer. The output of this neuron on the input $\x^j$ is $\sgn(\langle\a,\x^j\rangle + b)$, where $\a\in \R^{w_0}, b\in \R$ are the weights associated with this neuron. As a function of these weights, this is a linear function, i.e., a  polynomial function of degree 1. Applying Lemma~\ref{lem:poly-decomp}, there are at most $\left(\frac{et}{w_0+1}\right)^{w_0+1}$ regions in the space $(\a,b) \in \R^{w_0} \times \R$ such that within each region, the output of this neuron is constant. Applying the reasoning for the $w_1$ neurons in the first layer, we obtain that the output of the first hidden layer (as a $0/1$ vector in $\R^{w_1}$) is piecewise constant as a function of the parameters of the first layer, with at most $\left(\frac{et}{w_0+1}\right)^{(w_0+1)w_1}$ pieces. For a fixed output of the first hidden layer (which is a vector in $\R^{w_1}$), we can apply the same reasoning and partition space of weights of the second layer into $\left(\frac{et}{w_1+1}\right)^{(w_1+1)w_2}$ regions where the output of the second hidden layer is constant. Applying this argument iteratively across the hidden layers, and using the inequality \eqref{eq:Young-inequality} in Lemma~\ref{lem:useful-inequalities}, we deduce that a decomposition exists for $ \mathbb{R}^W $ with at most 
    \begin{align*}
        &\left(\frac{et}{w_0+1}\right)^{(w_0+1)w_1} \left(\frac{et}{w_1+1}\right)^{(w_1+1)w_2} \cdots \left( \frac{et}{w_{L-1}+1} \right)^{(w_{L-1}+1)w_L} \\
        \leq &\left(\left(\frac{et}{w_0+1}\right)^{\frac{(w_0+1)w_1}{W'}}\left(\frac{et}{(w_1+1)}\right)^{\frac{(w_1+1)w_2}{W'}} \cdots \left( \frac{et}{w_{L-1}+1} \right)^{\frac{(w_{L-1}+1)w_L}{W'}}\right)^{W'}\\
        \leq &\left( \frac{etw_1}{W'} + \cdots + \frac{etw_L}{W'} \right)^{W'}\\
        = &\left( \frac{etU}{W'} \right)^{W'}
    \end{align*}
    regions, such that within each such region the output of the last hidden layer of the neural network is constant, as a function of the neural network parameters, for all the vectors $ \mathbf{x}^1, \ldots, \mathbf{x}^t $. 
\end{proof}

\section{Proofs of main results}\label{section:proofs-of-main-results}

\begin{proof}[Proof of \cref{thm:main_theorem_1}]
Let $h: \I \times \R^W \to \R$ as $$h(I, \w):= S(I, \varphi^{N^{\sgn},\Sigmoid}_\w(I)),$$ using the notation from \cref{section:learnability_for_general_algorithms}. We wish apply Lemma~\ref{lem:piecewise-poly-pdim} on the parameterized function class given by $h$ with $\P = \R^W$. Accordingly, we need to find a function $R:\mathbb{N} \to \mathbb{N}$ such that for any natural number $t>W$, and any $I_1, \ldots, I_t$, the function $\w \mapsto (h(I_1, \w), \ldots, h(I_t, \w))$ is a piecewise polynomial function on $\R^W$ with at most $R(t)$ pieces.

We consider the space of the neural parameters as a Cartesian product of the space $\W'$ of all the parameters of the neural network, except for the last linear transformation $T_{L+1}$, and the space $\R^{\ell\times w_L}$ of matrices representing this final linear transformation. Thus, we identify a one-to-one correspondence between $\R^{W}$ and $\W' \times \R^{\ell\times w_L}$.

By Lemma~\ref{lem:LT-NN-regions}, there exist a decomposition of $\W'$ into at most $\left(\frac{etU}{W'}\right)^{W'}$ regions, where $W' = W - \ell w_L$ is the number of parameters determining the space $\W'$, such that within each region, the output of the final hidden layer of the neural network is constant (as a function of the parameters in the region) for each input $\Enc(I_j)$, $j\in[t]$.

We fix the parameters $\w \in \W'$ to be in one of these regions and let $\z^j$ be the (constant) output corresponding to input $\Enc(I_j)$ for any parameter settings in this region. Let us consider the behaviour of $\Sigmoid(A^{L+1}\z^j)$, which is the result of a sigmoid activation applied on the final output of the neural network, as a function of the final set of parameters encoded by the matrix $A^{L+1} \in \R^{\ell\times w_L}$. We follow the approach used in the proof of Theorem 8.11 in \cite{anthony1999neural}. For each  $k \in [\ell]$,
    \begin{equation*}
        (\Sigmoid(A^{L+1}\z^j))_k  = \frac{1}{1+\exp(-\sum_{i=1}^{w_L}(A^{L+1}_{ki}\z^j_i   ))} 
         = \frac{\prod_{i=1}^{w_L}(e^{-A^{L+1}_{ki}})}{\prod_{i=1}^{w_L}(e^{-A^{L+1}_{ki}}) + \prod_{i=1}^{w_L} (e^{-A^{L+1}_{ki}})^{1+\z^j_i}}.
    \end{equation*}
   Let $\theta_{ki}=e^{-A^{L+1}_{ki}}$ for $i \in [w_L]$, we have
    $$(\Sigmoid(A^{L+1}\z^j))_k =\frac{\prod_{i=1}^{w_L}\theta_{ki}}{\prod_{i=1}^{w_L}\theta_{ki} + \left(\prod_{i=1}^{w_L} \theta_{ki}^{1+\z^j_i}\right)}.$$ Note that the right hand side above is a ratio of polynomials in $\theta_{ki}$ with degrees at most $2w_L$. 

    Next, as per the hypothesis of Theorem~\ref{thm:main_theorem_1}, let $\psi^j_1, \ldots, \psi^j_\Gamma$ be the polynomials on $\R^\ell$, each of degree at most $\gamma$, such that the function $S(I_j, \cdot)$ is a polynomial with degree at most $\lambda$ within each of the regions where the signs of $\psi^j_1, \ldots, \psi^j_\Gamma$ are constant. Moreover, let $T: [0,1]^\ell \to \P$ be the affine linear map $T(\u)_k = \eta_k + (\tau_k - \eta_k)\u_k$. We observe then that for all $A^{L+1}$ such that the functions 
    $$\psi^j_1(T(\Sigmoid(A^{L+1}\z^j))), \ldots, \psi^j_\Gamma(T(\Sigmoid(A^{L+1}\z^j)))$$ 
    have the same signs, then $h(I_j, (\w,A^{L+1})) = S(I_j, \varphi^{N^{\sgn},\Sigmoid}_{\w,A^{L+1}}(I_j))$ is a polynomial of degree at most $\lambda.$ By the observations made above, the functions $\psi^j_1(T(\Sigmoid(A^{L+1}\z^j))), \ldots, \psi^j_\Gamma(T(\Sigmoid(A^{L+1}\z^j)))$ are rational functions, i.e., ratios of polynomials, in the transformed parameters $\theta_{ki}$ and the numerators and denominators of these rational functions have degrees bounded by $2w_L\gamma$. Since $\sgn \left( \frac{P}{Q} \right) = \sgn(PQ) $ for any multivariate polynomials $P $, $Q$ (whenever the denominator is nonzero), we can bound the total number of sign patterns for $\psi^j_1(T(\Sigmoid(A^{L+1}\z^j))), \ldots, \psi^j_\Gamma(T(\Sigmoid(A^{L+1}\z^j)))$ using Lemma~\ref{lem:poly-decomp} where the polynomials defining the regions have degree at most $3w_L\gamma$ on the transformed parameters $\theta_{ki}$. We have to consider all the functions $\psi^j_1(T(\Sigmoid(A^{L+1}\z^j))), \ldots, \psi^j_\Gamma(T(\Sigmoid(A^{L+1}\z^j)))$ for $j\in[t]$, giving us a total of $t\Gamma$ rational functions. Thus, an application of Lemma~\ref{lem:poly-decomp} gives us a decomposition of $\R^{\ell\times w_L}$ into at most 
    \begin{equation*}
        2 \left( \frac{2e \cdot t \Gamma \cdot 3\gamma w_L}{\ell w_L} \right)^{\ell w_L} \leq  2\left( \frac{6et \gamma \Gamma }{\ell} \right)^{\ell w_L}  
    \end{equation*}
    regions such that $h(I_j, (\w,A^{L+1}))$ is a polynomial of degree at most $\lambda$ within each region. Combined with the bound $\left(\frac{etU}{W'}\right)^{W'}$ on the number of regions for $\w \in \W'$, we obtain a decomposition of the full parameter space $\W' \times \R^{\ell\times w_L}$ into at most 
    $$R(t):= \left(\frac{etU}{W'}\right)^{W'}\cdot 2\left( \frac{6et \gamma \Gamma }{\ell} \right)^{\ell w_L}$$
     regions, such that within each region $h(I_j, (\w,A^{L+1}))$, as a function of $(\w,A^{L+1})$, is a polynomial of degree at most $\lambda$, for every $j \in [t]$. Moreover, note that within each such region, $h(I_j, (\w,A^{L+1}))$ depends only on $A^{L+1}$. Applying Lemma~\ref{lem:piecewise-poly-pdim}, $\Pdim \left( \F^S_{N^{\sgn},\Sigmoid} \right)$ is bounded by the largest $t \in \N$ such that 
     $$2^{t-1} \leq \left(\frac{etU}{W'}\right)^{W'}\cdot 2\left( \frac{6et \gamma \Gamma }{\ell} \right)^{\ell w_L}\cdot \left(\frac{2et(\lambda+1)}{\ell w_L}\right)^{\ell w_L}.$$

    Taking logarithms on both sides, we want the largest $t$ such that
    \begin{equation*}
        \frac{1}{2}(t - 2) \leq W' \log \left( \frac{etU}{W'} \right) + \ell w_L \log \left( \frac{6et \gamma \Gamma}{\ell} \right) + \ell w_L \log \left( \frac{2et(\lambda+1)}{\ell w_L} \right)\\
    \end{equation*}
    As we only need to derive an upper bound for the pseudo-dimension, we can loosen the inequality above using the inequality \eqref{eq:log-inequality} in \cref{lem:useful-inequalities}: 
    \begin{align*}
        \frac{1}{2}(t-2) \leq &W'\left( \frac{etU/W'}{8eU} + \log(8U) \right) + \ell w_L \left( \frac{6et\gamma \Gamma / \ell}{48e\gamma \Gamma w_L} + \log(48 \gamma \Gamma w_L) \right)\\ & + \ell w_L \left( \frac{2et(\lambda+1) /(\ell w_L)}{16e (\lambda+1)} + \log(16 (\lambda+1)) \right)\\
        \leq &\frac{1}{8}t + W' \log(8U)+ \frac{1}{8}t + \ell w_L \log(48 \gamma \Gamma w_L) + \frac{1}{8}t + \ell w_L \log(16 (\lambda+1)) \\
        \leq &\frac{3}{8}t + W \log(8U) + \ell w_L \log\left(\frac{48 \gamma \Gamma w_L \cdot 16 (\lambda+1)}{8U}\right)\\
        \leq &\frac{3}{8}t + W \log(8U) + \ell w_L \log(96\gamma \Gamma (\lambda+1) w_L / U) 
    \end{align*}

    then it's not hard to see that 
    \begin{equation*}
        \Pdim \left( \F^S_{N^{\sgn},T \circ\Sigmoid} \right) = \O\left( W \log U + \ell w_L \log \left( \gamma \Gamma (\lambda+1) \right) \right) = \O \left(W \log (U \gamma \Gamma (\lambda+1)) \right). 
    \end{equation*}
\end{proof}

\begin{proof}[Proof of \cref{thm:main_theorem_2}]

    Let $h: \I \times \R^W \to \R$ as $$h(I, \w):= S(I, \varphi^{N^{\ReLU},\CReLU}_\w(I)),$$ using the notation from \cref{section:learnability_for_general_algorithms}. We wish apply Lemma~\ref{lem:piecewise-poly-pdim} on the parameterized function class given by $h$ with $\P = \R^W$. Accordingly, we need to find a function $R:\mathbb{N} \to \mathbb{N}$ such that for any natural number $t>LW$, and any $I_1, \ldots, I_t$, the function $\w \mapsto (h(I_1, \w), \ldots, h(I_t, \w))$ is a piecewise polynomial function on $\R^W$ with at most $R(t)$ pieces.
    
    Note that $\varphi^{\ReLU,\CReLU}_\w(I)$ can be seen as the output of a neural network with ReLU activations and architecture $[d,w_1, \ldots, w_L, 2\ell, \ell]$, where the final linear function is the fixed function $T(\u)_k = (\tau_k - \eta_k)\u_k$. This is because $\CReLU(x) = \ReLU(x)-\ReLU(x-1)$ can be simulated using two $\ReLU$ neurons. Applying Lemma~\ref{lem:ReLU-NN-regions}, this implies that given $t\in \mathbb{N}$ and $I_1, \ldots, I_t \in \I$, there are 
    $$Q\leq 2^{L+1}\left(\frac{2et \cdot\sum_{i=1}^{L+1}(iw_i)}{(L+1)W}\right)^{(L+1)W} \leq 2^{L+1}\left(\frac{2et(U + 2\ell)}{W}\right)^{(L+1)W}$$ 
    regions $\W_1, \ldots, \W_Q$ whose union is all of $\R^W$, such that $\varphi^{N^{\ReLU},\CReLU}_\w(I_j)$ restricted to $\w \in \W_i$ is a polynomial function of degree at most $L+2$ for all $(i,j) \in [Q] \times [t]$.
    
    Next, as per the hypothesis of Theorem~\ref{thm:main_theorem_1}, let $\psi^j_1, \ldots, \psi^j_\Gamma$ be the polynomials on $\R^\ell$, each of degree at most $\gamma$, such that the function $S(I_j, \cdot)$ is a polynomial with degree at most $\lambda$ within each of the regions where the signs of $\psi^j_1, \ldots, \psi^j_\Gamma$ are constant. Thus, for all $\w \in \R^W$ such that  $$\psi^j_1(\varphi^{\ReLU,\CReLU}_\w(I_j)), \ldots, \psi^j_\Gamma(\varphi^{\ReLU,\CReLU}_\w(I_j))$$ have the same signs, then $h(I_j, \w) = S(I_j, \varphi^{\ReLU,\CReLU}_{\w}(I_j))$ is a polynomial of degree at most $\lambda(L+2).$ The functions $\psi^j_1(\varphi^{\ReLU,\CReLU}_\w(I_j)), \ldots, \psi^j_\Gamma(\varphi^{\ReLU,\CReLU}_\w(I_j))$ are polynomials of degree at most $\gamma(L+2)$. Considering all these polynomials for $j\in[t]$, by Lemma~\ref{lem:poly-decomp}, each $\W_i \subseteq \R^W$ from the decomposition above can be further decomposed into at most $2\left(\frac{2et\Gamma\gamma(L+2)}{W}\right)^W$ regions such that for all $\w$ in such a region, $h(I_j, \w)$ is a polynomial function of $\w$ of degree at most $\lambda(L+2)$. 
    
    To summarize the arguments above, we have a decomposition of $\R^W$ into at most 
    $$R(t):= 2^{L+1}\left(\frac{2et(U + 2\ell)}{W}\right)^{(L+1)W}\cdot 2\left(\frac{2et\Gamma\gamma(L+2)}{W}\right)^W$$ 
    regions such that within each region, $h(I_j, \w)$ is a polynomial function of $\w$ of degree at most $\lambda(L+2)$. Applying Lemma~\ref{lem:piecewise-poly-pdim}, $\Pdim \left( \F^S_{N^{\ReLU},\CReLU} \right)$ is bounded by the largest $t \in \N$ such that 
    $$2^{t-1} \leq 2^{L+1}\left(\frac{2et(U + 2\ell)}{W}\right)^{(L+1)W}\cdot 2\left(\frac{2et\Gamma\gamma(L+2)}{W}\right)^W\cdot \left(\frac{2et(\lambda+1)(L+2)}{W}\right)^{W},$$
    which is bounded by the largest $t$ such that
    \begin{align*}
        \frac{1}{2}(t-1) \leq &L+2 + (L+1)W \log \left( \frac{2et(U + 2\ell)}{W} \right) + W \log \left( \frac{2et\Gamma\gamma(L+2)}{W} \right)\\
        &+ W \log \left( \frac{2et(\lambda+1)(L+2)}{W} \right)\\
        \leq &L+2 + \frac{1}{8}t + (L+1)W\log(16(U+2\ell)) + \frac{1}{8}t  + W \log(16\Gamma\gamma(L+2))\\ 
        &+ \frac{1}{8}t + W \log(16(\lambda+1)(L+2))\\
        \leq &L+2 + \frac{3}{8}t + (L+1)W \log(16(U+2\ell)) + W \log(\gamma \Gamma (\lambda+1)) + 2W \log(16(L+2)), 
    \end{align*}
    where the inequality \eqref{eq:log-inequality} in \cref{lem:useful-inequalities} is applied in the second line. Then it's not hard to see that
    \begin{equation*}
        \Pdim \left( \F^S_{N^{\ReLU},\CReLU} \right) = \O\left( LW \log(U+\ell) +W \log (\gamma \Gamma (\lambda+1)) \right).
    \end{equation*}
    \end{proof}

    \begin{proof}[Proof of \cref{cor:multi-classification}]
        We introduce an auxiliary function $f:\R^r \rightarrow \{\p_1,\dots,\p_r\}$ given by $f(\x) = \p_{\argmax_{i \in [r]} \x_i}$, and let $S':\I \times \R^r \rightarrow \R$ be
        \begin{equation*}
            S'(I, \x) := S (I, f(\x)).
        \end{equation*}
        There exists a decomposition of the $\R^r$ space obtained by at most $\frac{r(r-1)}{2}$ hyperplanes 
        \begin{equation*}
            \left\{ \x \in \R^r: \x_i = \x_j \right\},\quad \forall(i,j) \in [r]\times[r],i \neq j.
        \end{equation*}
        Within each decomposed region, the largest coordinate of $\x$ is unchanged. Therefore, for any fixed $I \in \I$, the new score function $S'(I,\cdot)$ remains constant in each of these regions. Then a direct application of \cref{thm:main_theorem_1} and \cref{thm:main_theorem_2} to $S'$ yields the desired result. 
    \end{proof}

    \begin{proof}[Proof of \cref{prop:learnability-of-cut-selection-policy}]
        The proof of \cref{prop:learnability-of-cut-selection-policy} is analogous to the proof of Theorem 4.1 in \cite{balcan2021sample}. For any fixed instance $I \in \I$, let the set of candidate cutting planes be $\mathcal{C} = \{c_1, \dots, c_{|\mathcal{C}|}\}$. Comparing the overall scores for all cuts introduces the following hyperplanes:
        \begin{equation*}
            \sum_{i=1}^{\ell} \bm{\mu}_i \Score_i(c_j, I) = \sum_{i=1}^{\ell} \bm{\mu}_i \Score_i(c_k, I), \quad \forall j, k \in [|\mathcal{C}|], j \neq k.
        \end{equation*}
        There are at most $\frac{|\mathcal{C}|(|\mathcal{C}|-1)}{2}$ hyperplanes decomposing the $\bm{\mu}$ space $[0,1]^\ell$. Within each region defined by these hyperplanes, the selected cut remains the same, so the branch-and-cut tree size is constant. This proves that $f_S(I, \bm{\mu})$ is a piecewise constant function on $\bm{\mu}$, for any fixed $I$. We then apply \cref{thm:main_theorem_1} and \cref{thm:main_theorem_2} to derive the pseudo-dimension bounds.
    \end{proof}

\end{document}